%% file: main.tex
\documentclass{article}

\usepackage[accepted]{icml2025}

\usepackage[utf8]{inputenc}
\usepackage[T1]{fontenc}
\usepackage{microtype}

\usepackage{amsmath}
\usepackage{amssymb}
\usepackage{amsfonts}
\usepackage{mathtools}
\usepackage{amsthm}
\usepackage{nicefrac}
\usepackage{caption}
\usepackage{graphicx}
\usepackage{booktabs}
\usepackage{multirow}
\usepackage{rotating}

\usepackage{url}
\usepackage{xcolor}
\usepackage[most]{tcolorbox}
\usepackage{listings}
\usepackage{hyperref}
\usepackage[capitalize,noabbrev]{cleveref}

\theoremstyle{plain}
\newtheorem{theorem}{Theorem}[section]
\newtheorem{proposition}[theorem]{Proposition}

\theoremstyle{definition}

\theoremstyle{remark}

\newtcolorbox{promptbox}[1][]{
  enhanced,
  breakable,
  colback=gray!4,
  colframe=gray!55!black,
  boxrule=0.4pt,
  arc=2pt,
  left=10pt, right=10pt, top=8pt, bottom=8pt,
  fonttitle=\bfseries\small,
  coltitle=white,
  colbacktitle=gray!55!black,
  attach boxed title to top left={yshift=-2mm, xshift=4mm},
  boxed title style={arc=1pt, boxrule=0pt},
  title={Budget-aware banner (prepended to the standard solver prompt)},
  #1
}

\newtcolorbox{triageboxprompt}[1]{
  enhanced,
  breakable,
  colback=gray!4,
  colframe=gray!55!black,
  boxrule=0.4pt,
  arc=2pt,
  left=10pt, right=10pt, top=10pt, bottom=10pt,
  fonttitle=\bfseries\small,
  coltitle=white,
  colbacktitle=gray!55!black,
  attach boxed title to top left={yshift=-2mm, xshift=4mm},
  boxed title style={arc=1pt, boxrule=0pt},
  title={Triage Prompt},
  before upper={%
    {\sffamily\bfseries\small\color{gray!35!black}#1}\par
    \vspace{3pt}%
    {\color{gray!60!black}\hrule height 0.3pt}%
    \vspace{8pt}%
    \itshape
  }
}

\newtcblisting{plannerprompt}[1][]{
  enhanced, breakable,
  colback=gray!4, colframe=gray!55!black,
  boxrule=0.4pt, arc=2pt,
  left=10pt, right=10pt, top=8pt, bottom=8pt,
  fonttitle=\bfseries\small, coltitle=white,
  colbacktitle=gray!55!black,
  attach boxed title to top left={yshift=-2mm, xshift=4mm},
  boxed title style={arc=1pt, boxrule=0pt},
  listing only,
  listing options={
    basicstyle=\ttfamily\footnotesize,
    breaklines=true, showstringspaces=false,
    columns=fullflexible, keepspaces=true,
  },
  title={Planner Prompt},
  #1
}

\icmltitlerunning{TRIAGE: Evaluating Prospective Metacognitive Control in LLMs}

\begin{document}

\twocolumn[
\icmltitle{\textsc{Triage}: Evaluating Prospective Metacognitive Control in LLMs under Resource Constraints}

\icmlsetsymbol{equal}{*}

\begin{icmlauthorlist}
\icmlauthor{Zabir Al Nazi}{ucr}
\icmlauthor{Shubhashis Roy Dipta}{umbc}
\end{icmlauthorlist}

\icmlaffiliation{ucr}{University of California, Riverside, USA}
\icmlaffiliation{umbc}{University of Maryland, Baltimore County, USA}

\icmlcorrespondingauthor{Zabir Al Nazi}{znazi002@ucr.edu}
\icmlcorrespondingauthor{Shubhashis Roy Dipta}{sroydip1@umbc.edu}

\icmlkeywords{Machine Learning, ICML, Metacognition, LLM, Resource Constraints}

\vskip 0.3in
]


\begin{abstract}
\input{sections/abstract}
\end{abstract}

\section{Introduction}
\label{sec:introduction}
\input{sections/introduction}

\section{Related Work}
\label{sec:related_work}
\input{sections/related_work}

\label{sec:dataset}
\input{sections/dataset}

\section{Experimental Setup}
\label{sec:experiments}
\input{sections/experiments}

\section{Results}
\label{sec:results}
\input{sections/results}

\section{Conclusion}
\label{sec:conclusion}
\input{sections/conclusion}


\section*{Impact Statement}
This paper presents work whose goal is to advance the field of Machine Learning. There are many potential societal consequences of our work, none which we feel must be specifically highlighted here.

{
\small
\bibliographystyle{icml2025}
\bibliography{references}
}

\newpage
\appendix
\onecolumn

\section{Limitations and Broader Impact}
\label{sec:limitations}
\input{sections/limitations}


\label{app:additional}
\input{sections/appendix_additional}

\end{document}

%% file: sections/abstract.tex

Deploying language models as autonomous agents requires more than per-task accuracy: when an agent faces a queue of problems under a finite token budget, it must decide which to attempt, in what order, and how much compute to commit to each, all before any execution feedback is available. This is the prospective form of metacognitive control studied for decades in human cognition, yet whether language models possess it remains untested. We introduce TRIAGE, an evaluation framework in which a model receives a task pool and a token budget calibrated to its own baseline cost, and commits to a single ordered plan that jointly encodes selection, sequencing, and per-problem allocation. Plans are scored against an oracle with full knowledge of the model's solvability and cost on each problem, yielding a triage efficiency ratio on a common scale. We evaluate frontier and open-source models, with and without reasoning enabled, across competition mathematics, graduate-level science, code generation, and expert multidisciplinary knowledge, and find that current language models exhibit substantial gaps in prospective metacognitive control, revealing a previously unmeasured capability dimension with direct implications for resource-efficient agent deployment. 

%% file: sections/introduction.tex

Scaling test-time compute has become a major paradigm for improving LLM reasoning~\citep{snell2024scaling}, but it has also made compute allocation a central unsolved problem: current reasoning models lack reliable \textit{prospective control} over their own compute expenditure, and adaptive allocation per problem can yield substantial efficiency gains over uniform budgets~\citep{snell2024scaling, alomrani2025reasoning}. Reasoning models generate thousands of tokens even for trivial problems~\citep{chen2024overthinking} and underthink hard ones, prematurely abandoning promising reasoning paths~\citep{wang2025underthinking}. \textit{Unconstrained agents}, similarly, can consume thousands of tokens on agentic tasks and still fail, exhibiting analysis paralysis and wasted compute that correlates with decreased performance~\citep{cuadron2025overthinking}.

\hspace{1em}This creates a concrete deployment problem: agents built on these models face real resource constraints - token budgets, tool-call limits, latency windows - and must decide which tasks to attempt before committing resources. A resource-rational agent should allocate finite compute to the tasks where its expected return is highest~\citep{lieder2020resource}. In practice, this means an LLM facing a queue of problems under a token budget should abstain from attempting ones it cannot solve, spend less on easy ones, and safely commit sufficient resources to each attempted problem so that under-allocation does not expend budget on partial or incomplete solutions that yield no return - the same strategy a student applies on a timed exam - the capability that \citet{nelson1990metamemory} formalized as \emph{metacognitive control}: regulating effort allocation based on judgments of one's own knowledge state.

\hspace{1em}Recent work suggests LLMs have some of this capability in isolation: large language models can predict whether they will answer a question correctly~\citep{kadavath2022language}, yet reasoning-trained models are worse - not better - at knowing when to abstain~\citep{kirichenko2025abstentionbench}. \citet{barkan2025capable} found that frontier models are systematically overconfident about their own capabilities, and that this overconfidence does not diminish with scale or reasoning augmentation. The critical capability for an LLM planner is not just predicting token cost or knowing when to abstain from a single question, but managing a portfolio: given a set of tasks and a shared budget, selecting which to attempt, estimating cost, and ordering execution to maximize value. This is the problem that many real-world agentic deployments face - coding agents working through backlogs of issues, research agents decomposing queries into sub-questions of varying difficulty, and planning agents budgeting compute across subtasks of a larger goal - all require cross-task prioritization under finite compute. Existing per-task budgeting methods estimate token cost for a single problem~\citep{han2025token, li2025selfbudgeter}, and external batch schedulers rank or route requests for throughput~\citep{jin2023s3, fu2024scheduling}. However, whether LLMs can themselves perform the joint optimization - assessing their own feasibility, cost, and priority across a heterogeneous task set under a shared budget - remains untested, and answering it is a prerequisite for building agents that can plan and optimize their own resource use rather than relying on external orchestration.

\hspace{1em}We introduce TRIAGE, an evaluation framework for \textbf{prospective metacognitive control} under resource constraints. The construct has three components. \textbf{Metacognitive control} is the use of self-knowledge to regulate behavior, graded by the consequences of the model's plan rather than by verbalized confidence~\citep{ackerman2025limited}. \textbf{Prospective} denotes that all decisions are committed before any task is attempted, distinguishing TRIAGE from retrospective measures of confidence or post-hoc calibration~\citep{kadavath2022language, koriat1997monitoring}. \textbf{Resource constraints} refers to a finite shared budget - output tokens by default - that the model's behavior consumes~\citep{han2025token}. In TRIAGE, a model receives a set of problems, point values per problem, and a token budget calibrated to its own baseline cost on the set. It returns a single ordered plan specifying which problems to attempt, how many tokens to allocate to each, and in what order. This design operationalizes three foundational control functions from \citet{nelson1990metamemory}  in their prospective form: selection (which problems to attempt), allocation (how much compute to commit per problem), and termination (where to commit budget cutoffs ex ante). TRIAGE evaluates the control output directly rather than asking the model to verbalize its underlying judgments - appropriate for LLMs, where self-reports are an unreliable signal of internal state and metacognition is better measured behaviorally~\citep{ackerman2025limited}. Just as human learners plan effort allocation under time pressure~\citep{son2000metacognitive}, an autonomous agent must jointly optimize them to maximize expected utility under finite inference limits.
 
\hspace{1em}We evaluate 20 model architectures - 4 standard LLMs and 16 reasoning-augmented LLMs, evaluated both with and without thinking enabled - across four domains: competition mathematics (AIME 2024--2025), graduate-level science (GPQA Diamond; \citealp{rein2024gpqa}), code generation (LiveCodeBench; \citealp{jain2024livecodebench}), and multidisciplinary expert knowledge (Humanity's Last Exam; \citealp{phan2025humanity}). 

We find that current language models exhibit substantial and uneven gaps
in prospective metacognitive control. Strong self-assessment in one
domain rarely transfers to another, extended reasoning improves
task-level accuracy without improving triage quality, and reasoning-trained
models are less likely to recognize unsolvable items when
they appear in a task pool. The gap between advisory and enforced regimes
is wide across budget levels, indicating that models can sometimes
choose what to attempt but rarely commit budgets that they themselves
can honor. Together, these experiments characterize a previously unmeasured capability dimension with direct implications for resource-efficient agent deployment.

%% file: sections/related_work.tex
\subsection{Metacognition in Large Language Models}
 
Metacognition, the capacity to monitor and regulate one's own cognitive processes~\citep{nelson1990metamemory}, has become a central concern in LLM evaluation. Research has progressed through three levels of investigation: metacognitive \emph{knowledge} (can the model identify what a task requires?), metacognitive \emph{monitoring} (can the model judge whether it will succeed?), and metacognitive \emph{control} (can the model act on those judgments to regulate its behavior?). Existing work has made progress on the first two; the third remains untested because no current evaluation forces a model to commit, in advance, to a portfolio of tasks under a shared budget -- a regime where control becomes operationally distinct from monitoring.
 
\paragraph{Metacognitive knowledge and monitoring.}
\citet{kadavath2022language} provided the foundational result on monitoring, showing that LLMs can predict which questions they will answer correctly, with calibration improving at scale. \citet{didolkar2024metacognitive} demonstrated that LLMs also possess metacognitive \emph{knowledge}: given a math problem, frontier models can assign meaningful skill labels and use skill-matched exemplars to improve reasoning, establishing that models can assess task properties. \citet{xiong2024uncertainty} systematically benchmarked confidence elicitation methods and found that LLMs are consistently overconfident when verbalizing confidence, though both calibration and failure prediction improve with scale. \citet{wang2025dmc} formalized the separation of metacognitive ability from raw cognitive ability via their DMC framework, finding that stronger LLMs exhibit stronger metacognition and that enhancing metacognition alleviates hallucination. At the mechanistic level, \citet{ji2025metacognitive} demonstrated that LLMs can monitor and control a low-dimensional subspace of their internal activations via a neurofeedback paradigm -- direct evidence of metacognitive access to internal states, though limited to concurrent activations rather than prospective task-level judgments.
 
These results are counterbalanced by evidence of systematic failure. \citet{kapoor2024taught} showed that verbalized confidence is poorly calibrated for open-ended generation and that a classifier trained on internal features substantially outperforms prompting -- implying that metacognitive monitoring is latent in representations but not reliably accessible through natural outputs. \citet{griot2025metacognition} found that LLMs provide confident answers to medical questions even when correct options are entirely absent, revealing a disconnect between stated confidence and actual capability.

\citet{ackerman2025limited} used behavioral paradigms where the model must decide before answering whether to attempt or defer (the Delegate Game) and found only rudimentary, context-dependent metacognition that depends on training regimen rather than scale. \citet{steyvers2025metacognition} organize these mixed findings by distinguishing metacognitive \emph{sensitivity} (ranking own performance across items) from \emph{calibration} (matching confidence to success rate), noting that the two can vary independently and that neither is reliably strong across models and domains.
 
\paragraph{From monitoring to control.}
The work above measures whether models can assess their own capabilities. Real deployment requires \emph{acting} on those assessments to allocate resources. The closest prior work to ours is \citet{barkan2025capable}, who explicitly distinguish in-advance from after-the-fact confidence and test prospective self-assessment on coding and agentic tasks (SWE-Bench Verified). They find that frontier models are systematically overconfident, and that neither scale nor reasoning augmentation improves prediction accuracy. Their resource-acquisition experiment - accepting or declining sequential work contracts at varying cost - is the nearest existing analog to budget-constrained decision-making. But this evaluation design - sequential, independent accept/reject decisions - by construction cannot expose the combinatorial problem of jointly selecting, allocating, and ordering under a shared budget, where attempting one problem reduces what remains for all others. 
 
AbstentionBench~\citep{kirichenko2025abstentionbench} scales the monitoring question to a benchmark setting, testing whether models abstain from unanswerable questions across 20 datasets and finding that reasoning fine-tuning degrades abstention by ${\sim}$24\%. Their unanswerable variants of existing benchmarks inform our unsolvable-injection experiment. But abstention there is a property of the \emph{question} - some are answerable, others are not. In TRIAGE, the relevant property is the \emph{model--task--budget triple}: a problem may be answerable in principle but not by this model under this budget.
 
Across this literature, a consistent picture emerges: LLMs possess metacognitive knowledge and some monitoring ability, but both are contested in degree and fragile across domains. More critically, all existing evaluations operate one task at a time. The portfolio-level version of the problem is harder - accurate monitoring of any one task is necessary but not sufficient, because the model must also compare feasibility and cost across tasks while respecting a shared constraint. Whether current models can make this transition from single-task monitoring to portfolio-level resource allocation is what TRIAGE is designed to test.
 
\subsection{Compute Allocation for LLM Reasoning}
 
The test-time compute scaling paradigm~\citep{snell2024scaling} has made compute allocation a practical problem. Reasoning models overthink trivial problems~\citep{chen2024overthinking}, underthink hard ones~\citep{wang2025underthinking}, and in agentic settings exhibit analysis paralysis that correlates with decreased performance~\citep{cuadron2025overthinking}. Existing methods address this waste, but split on who allocates and at what granularity, and neither side covers the full problem.
 
On the self-assessed side, TALE~\citep{han2025token} has the LLM estimate its own token cost before solving a problem, and SelfBudgeter~\citep{li2025selfbudgeter} trains the model to emit a budget before its chain of thought. Both are prospective and self-assessed, but per-task: the model decides how long to spend, never whether to attempt at all, and there is no shared budget linking decisions across tasks. On the systems side, external predictors schedule LLM inference for throughput - via length-bucket classification~\citep{jin2023s3}, listwise ranking~\citep{fu2024scheduling}, or interval prediction under uncertainty~\citep{chen2025adaptive}. These operate at the portfolio level, but the predictor is external to the LLM, the objective is latency, and the model's own judgment of what it can solve plays no role.
 
Existing benchmarks test related but distinct capabilities. PlanBench~\citep{valmeekam2023planbench} evaluates whether LLMs can produce valid plans in classical planning domains, with automated ground-truth validation via external solvers - but tests plan \emph{execution}, not prospective self-assessment of which plans will succeed. $\tau$-bench~\citep{yao2024tau} introduces tool-call interactions as a measurable resource axis for agent evaluation, and GAIA~\citep{mialon2023gaia} provides a heterogeneous, difficulty-leveled agent benchmark with validated ground truth. Both motivate the agentic extension of TRIAGE but neither tests whether the agent can plan its own workload efficiently under finite resources.
 
These two literature meet at a measurement gap. Single-task confidence benchmarks isolate monitoring; external schedulers isolate allocation; neither evaluates the joint capability that deployment actually requires. TRIAGE is designed to close this gap as an evaluation. The model produces a plan - selecting tasks, estimating costs, and ordering execution - which we score by simulating its consequences against pre-computed ground truth, yielding a single, falsifiable measure of the joint product of metacognitive monitoring accuracy and planning quality under a binding budget. We design a portfolio-level test in which accurate per-task self-assessment is necessary but not sufficient, because the model must also compare feasibility and cost across tasks while respecting a shared constraint.

%% file: sections/dataset.tex
%
%



\providecommand{\sm}[1]{{\small $#1$}}

\section{TRIAGE Framework}
\label{sec:preliminaries}

\subsection{Prospective Metacognitive Control}
\label{sec:prospective-metacognitive-control}

Effective problem-solving under constraint requires more than the capacity to execute individual tasks; it requires metacognitive control over how that capacity is deployed, and for an agent operating under a finite execution budget, that control is exercised as prospective planning: a commitment, made before any problem is attempted, to which problems will be pursued, in what order, and how much of the budget each will receive. Cognitive psychology distinguishes an \emph{object level}, where a task is executed, from a \emph{meta level}, which monitors the object level and exerts control over it~\citep{nelson1990metamemory, koriat1996monitoring}. Two control functions of the meta level, what to attempt and how long to persist on what is attempted, are the levers behind human study-time allocation~\citep{son2000metacognitive, metcalfe2005region} and test-taking strategy~\citep{ackerman2011metacognitive}. We benchmark these levers in language models.

Metacognitive judgments partition by their position in the execution timeline~\citep{dunlosky2009metacognition}: prospective, formed before execution; concurrent, formed during; and retrospective, formed after. We focus on the prospective class, Ease-of-Learning judgments in the laboratory~\citep{underwood1966experimental, leonesio1990different}, initial test plans in applied settings, for three independent reasons.

\begin{description}
\item[Methodological isolation.]
Once feedback is allowed during execution, observed performance reflects an unidentifiable mixture of prospective judgment quality and adaptive replanning; the two cannot be separated post-hoc. Studying prospective judgments without intervening feedback is the established technique for isolating the construct in human work~\citep{mazzoni1993strategies, son2004spacing}. %

\item[Deployment realism.]
A growing class of LLM-driven systems commits to a task-level plan over a heterogeneous set of subtasks before any one is attempted. Deep research agents decompose a query into sub-questions and dispatch them across parallel sub-agents, each running with its own context window and token budget that the orchestrator cannot revise once dispatched~\citep{lin2026wide}. Coding agents process queues of bug reports or pull-request issues under per-issue token budgets, where wasted attempts on infeasible items measurably degrade overall throughput~\citep{jimenez2024swebench}. In both cases the quality of the upfront plan, what to attempt, in what order, with what allocation, determines whether the run completes usefully or exhausts its budget on the wrong items.

\item[Construct priority.]
A model that revises its plan mid-execution is only useful if its initial plan is any good~\citep{thiede1999toward}. To measure what mid-execution adjustments add, we first need a clean measurement of the up-front plan in isolation, and that requires evaluating planning without any feedback during execution.
\end{description}

\paragraph{Tokens as the control variable.}
Classical study-allocation work uses time as the meta-level control variable. In autoregressive language models, output tokens are the natural unit of expended effort: generation time, compute, and monetary cost are all approximately proportional to token count, and the central paradigm of test-time compute scaling treats per-problem token allocation as the system's primary lever for adapting to problem difficulty~\citep{snell2024scaling, han2025token}. Tokens are therefore the operational analog of time in study-allocation theory, with the additional property of being exactly measurable.

\paragraph{Distributional self-knowledge and binding plans.}
Common agent pipelines generally decouple planning from execution into separate inference calls over identical weights~\citep{xu2023rewoo}. The planner does not introspect on an ongoing computation; it predicts the behavior of a same-weights solver under standard execution conditions. We treat this as the deployment-relevant form of self-modeling. The plan is moreover \emph{operationally binding}: at execution time, per-problem token allocations from the plan can be enforced as binding constraints on the solver, converting the plan from a forecast into a self-commitment and testing whether the model can satisfy a budget it set for itself; we study this enforced setting alongside the unconstrained one as a secondary regime (\S\ref{sec:problem-formulation}).

\paragraph{Four primitives.}
Prospective metacognitive control under a finite execution budget reduces to four jointly exercised judgments:
(i) \emph{feasibility}, whether the model can solve the problem?
(ii) \emph{cost}, how many tokens will it take?
(iii) \emph{selection}, which subset to attempt?
and (iv) \emph{sequencing}, in what order, given that budget exhaustion truncates the tail?
Existing LLM benchmarks isolate slices of this set: per-item confidence calibration covers (i)~\citep{barkan2025capable}; abstention covers a hard form of (i) without a budget~\citep{kirichenko2025abstentionbench}; per-task budget control covers (ii) for a single item~\citep{li2025selfbudgeter}. TRIAGE measures the joint exercise of all four primitives under a finite budget.

\subsection{Problem Formulation}
\label{sec:problem-formulation}

We formalize TRIAGE for a single \emph{task pool}, a finite set of problems sharing a domain and a budget, the unit of work the agent commits to in a single planning step before any execution feedback arrives.

\paragraph{Setup.}
A task pool \sm{P = \{p_1, \ldots, p_n\}} contains finitely many problems. For each problem \sm{p_i} and a fixed solver model \sm{M}: \sm{v_i \in \mathbb{R}_{>0}} is the point value, observable to \sm{M}; \sm{y_i \in \{0, 1\}} indicates whether \sm{M} solves \sm{p_i} under standard execution, unobservable to \sm{M} at planning time; and \sm{c_i \in \mathbb{R}_{>0}} is the output-token cost \sm{M} incurs when attempting \sm{p_i}, also unobservable at planning time. Both \sm{y_i} and \sm{c_i} are properties of the (problem, solver) pair, determined empirically by running \sm{M} on \sm{p_i} once at temperature \sm{0}. 

\paragraph{Budget.}
The budget is \sm{B(M, \alpha) = \lfloor \alpha \cdot \sum_{i=1}^{n} c_i \rfloor} with \sm{\alpha \in (0, 1]}; the floor reflects that token counts are integer-valued. The factor \sm{\alpha < 1} forces selection by setting the budget below the solver's full baseline cost; \sm{\alpha = 1} supplies the full baseline budget, in which selection pressure is reduced but the model must still identify which items it can solve, since omissions result in forfeiting value.

\paragraph{Action.}
The model produces an ordered plan \sm{\pi = (i_1, \ldots, i_k)} with \sm{0 \le k \le n}, indexing a selected subset \sm{S = \{i_1, \ldots, i_k\} \subseteq \{1, \ldots, n\}}, together with per-problem allocations \sm{a_{i_j} \in \mathbb{R}_{\ge 0}} satisfying \sm{\sum_{j} a_{i_j} \le B}. The plan thus jointly encodes selection, sequencing, and forecast.

\paragraph{Execution.}
Plans are executed in plan order under one of two regimes, which differ in whether the per-problem allocations \sm{a_i} are binding.

\label{eta_u}
\textit{Unconstrained regime (U).} The solver runs each problem to its natural completion at the cost \sm{c_i}; only the global budget \sm{B} binds. Execution proceeds through \sm{\pi} until reaching an item whose true cost exceeds the remaining budget, at which point execution stops. Problem \sm{i_j} contributes \sm{v_{i_j} y_{i_j}} to the achieved value if executed, and zero otherwise. 

\label{eta_e}
\textit{Constrained regime (E).} Each problem \sm{i_j} is executed with a hard token limit \sm{a_{i_j}} at the solver: if the solver completes within \sm{a_{i_j}}, the outcome is the result under that limit; otherwise the attempt scores zero. The full allocation \sm{a_{i_j}} is deducted from the remaining budget regardless of outcome, and execution proceeds through \sm{\pi} until reaching an item whose allocation exceeds the remaining budget, at which point execution stops. 

The two regimes isolate distinct components of the meta-level control problem~\citep{nelson1990metamemory}. Regime~U probes \emph{prospective monitoring}: the planner's predictions of \sm{(y_i, c_i)} surface only through which items it selects and in what order, since allocations are advisory; truncation discipline comes entirely from ordering quality. Regime~E probes \emph{prospective control} in the strict sense: per-problem allocations are binding, so the model is held to commitments it makes to itself. We write \sm{V_M(\pi)} for the achieved value at termination under whichever regime is in force, and report results in both. 

\label{triage}

\paragraph{Measuring triage skill.}
We need a single scalar that captures how well the planner's committed plan~\sm{\pi} acts on its (unobserved) knowledge of \sm{(y_i, c_i)} under the budget~\sm{B}. A raw value comparison \sm{V_M(\pi)} is uninformative on its own - its magnitude depends on pool difficulty and on \sm{B}, which vary across cells. The standard device in forecast verification~\citep{murphy1973vector} and in selective-prediction evaluation~\citep{geifman2017selective} is to \emph{normalize} the system's achieved value between a reference that uses no privileged knowledge and an oracle that uses all of it. We adopt that construction here.

\paragraph{Oracle and random references.}
The \emph{oracle} is a planner with full knowledge of \sm{(y_i, c_i)} for every~\sm{i}. Its achievable value is
\begin{equation}
  V_{\text{oracle}} \;=\;
  \max_{x \in \{0, 1\}^{n}} \;\sum_{i=1}^{n} v_i\, y_i\, x_i
  \quad \text{s.t.} \quad
  \sum_{i=1}^{n} c_i\, x_i \le B,
  \label{eq:oracle}
\end{equation}
a 0--1 knapsack - each problem is either included (\sm{x_i = 1}) or omitted (\sm{x_i = 0}), with no fractional attempts~\citep{kellerer2004knapsack}. The \emph{random} reference \sm{V_{\text{random}}} is the expected value of a feasible plan obtained without any self-knowledge, drawn uniformly without replacement subject to remaining-budget feasibility; we report the mean over \sm{10^{3}} such samples.

\paragraph{Triage efficiency ratio.}
The triage efficiency ratio normalizes the model's achieved value between these two references:
\begin{equation}
  \eta_M \;=\;
  \frac{V_M(\pi) - V_{\text{random}}}
       {V_{\text{oracle}} - V_{\text{random}}}.
  \label{eq:eta}
\end{equation}
On the standard interpretation, \sm{\eta_M = 0} corresponds to no use of self-knowledge and \sm{\eta_M = 1} to the value attainable with perfect self-knowledge; intermediate values report fractional skill on the random-to-oracle scale. Negative values indicate plans worse than uninformed random selection, an anti-skill signal under tight budgets, indicating the planner's choices systematically picked items that didn't fit the budget or weren't solvable. Pools with \sm{V_{\text{oracle}} = 0} (no problem in the pool is solvable by the model) are out of scope for triage measurement and excluded; we discuss the boundary behaviour of \sm{\eta_M} in Appendix~\ref{app:eta-boundary}.

\paragraph{Uniform values: preventing difficulty leakage.}
The main protocol fixes \sm{v_i = 1}. Any \sm{v_i} correlated with empirical difficulty would let the planner read difficulty off the prompt rather than estimate it from problem content, and language models are known to exploit such surface cues when present~\citep{koriat1997monitoring, tang2023large}. Under uniform \sm{v}, the oracle objective simplifies: \sm{V_{\text{oracle}}} equals the maximum number of solvable items (\sm{y_i = 1}) whose costs fit within~\sm{B}. Let \sm{S^{\star} = \{i : y_i = 1\}} and let \sm{i_{(1)}, i_{(2)}, \ldots} index \sm{S^{\star}} in ascending order of \sm{c_i}; then
\begin{equation}
  V_{\text{oracle}} \;=\;
  \max\Bigl\{\,m \,\Bigm|\, \sum_{j=1}^{m} c_{i_{(j)}} \le B\,\Bigr\}.
  \label{eq:oracle-uniform}
\end{equation}

The model receives only problem text and must form its plan without observing \sm{(y_i, c_i)}. Differences in \sm{\eta} between models therefore reflect differences in their implicit predictions of \sm{(y_i, c_i)}, not differences in optimization: the oracle, the random reference, and the execution rule are identical across models. Thus we measure the model's prediction of its own future behavior, with the oracle playing the role of an ideal forecaster, as in selective-prediction evaluation more broadly~\citep{geifman2017selective}.

%% file: sections/experiments.tex
\paragraph{Models.}
We evaluate 20 model architectures spanning frontier and open-source families
(Qwen~2.5/3/3.5, GPT-OSS, Claude, DeepSeek, Gemini, GLM, GPT-5, Kimi),
with reasoning enabled and disabled where supported. Full model
identifiers, providers, parameter counts, and reasoning configurations
are listed in Table~\ref{tab:models} (Appendix~\ref{app:models}).

\paragraph{Datasets and task pools.}
We construct task pools from four domains: competition mathematics
(AIME 2024--2025), graduate-level science
(GPQA Diamond~\citep{rein2024gpqa}), code generation
(LiveCodeBench~\citep{jain2024livecodebench}, filtered to contests from
2025 onward to reduce contamination risk from problems that may appear in pre-training data), and multidisciplinary expert knowledge (Humanity's Last
Exam~\citep{phan2025humanity}, excluding image-bearing items and topic
categories overlapping with the other three datasets). Each dataset is
partitioned into independent task pools of $n_{\text{chunk}} = 30$
problems to fit within planner context limits
and avoid the long-context degradation reported in prior work \cite{liu2024lostmiddle}. Each pool has its own budget, oracle,
and random reference, and $\eta$ is averaged across pools within each
$(\text{dataset}, \alpha)$ cell. 

To probe sensitivity to infeasible items, we additionally inject
unsolvable problems from AbstentionBench~\citep{kirichenko2025abstentionbench}
(GPQA-Abstain and MMLU-Math-Abstain) into pools at varying ratios. 

\paragraph{Phase~0: baseline measurement.}
We first run each model on every problem once, in an isolated session
at temperature~0, to record its baseline output-token cost $c_i$ and
correctness $y_i$. These two quantities serve two purposes: $c_i$
calibrates the per-pool budget
$B_{\text{local}} = \lfloor \alpha \sum_i c_i \rfloor$ given to the
planner in Phase~1, and $(c_i, y_i)$ together provide the ground truth
against which the oracle and $\eta$ are computed.

\paragraph{Phase~1: planning.}
The planner receives problem texts, point values ($v_i = 1$), the
per-pool budget
$B_{\text{local}} = \lfloor \alpha \sum_{i \in \text{pool}} c_i \rfloor$,
and instructions to return an ordered plan with per-problem token
allocations summing to at most $B_{\text{local}}$. The full prompt
template is given in Appendix~\ref{app:prompt}. A prompt sensitivity analysis is available in \ref{app:prompt-sensitivity} and shows robustness under prompt variations.

\paragraph{Phase~2: execution.}
Plans are executed under both regime~U and regime~E (\S\ref{eta_u}), under same configuration so that observed
differences in $\eta$ reflect planning quality rather than solver
stochasticity.

\paragraph{Budget conditions.}
We sweep $\alpha \in \{0.25, 0.5, 0.75, 1.0\}$ to vary selection
pressure from severe to slack. Main results report $\alpha = 0.5$, under 50\% budget constraint.

\paragraph{Metric.}
\label{metric}
We report triage efficiency \sm{\eta_M} (\S\ref{sec:problem-formulation})
as the primary metric and normalized regret
\sm{\tilde{R}_M = (V_{\text{oracle}} - V_M) / V_{\text{oracle}} \in [0, 1]}
as a bounded alternative for robustness (Appendix~\ref{app:regret});
lower \sm{\tilde{R}_M} is better. The random reference
\sm{V_{\text{random}}} is estimated by drawing \sm{10^3} uniform
shuffles of the gradeable items per pool and executing each under
regime-U rules - the same execution rule used to score model plans,
ensuring \sm{\eta_M} measures the value added by self-knowledge over
uninformed random selection.

For the unsolvable-injection probe, we additionally report
\emph{waste rate}
\sm{W = \sum_{i \in S \cap U} a_i \,/\, \sum_{i \in S} a_i},
the fraction of allocated tokens spent on unsolvable items, and
\emph{detection rate}
\sm{D = |U \setminus S| \,/\, |U|},
the fraction of unsolvable items correctly excluded from the plan,
where \sm{S = \{i : a_i > 0\}} is the planned set and
\sm{U} the injected unsolvable items in the pool.

%% file: sections/results.tex

%

\subsection{Triage at moderate budget}
\label{sec:results}

\begin{figure*}[t]
    \centering
    \includegraphics[width=\textwidth]{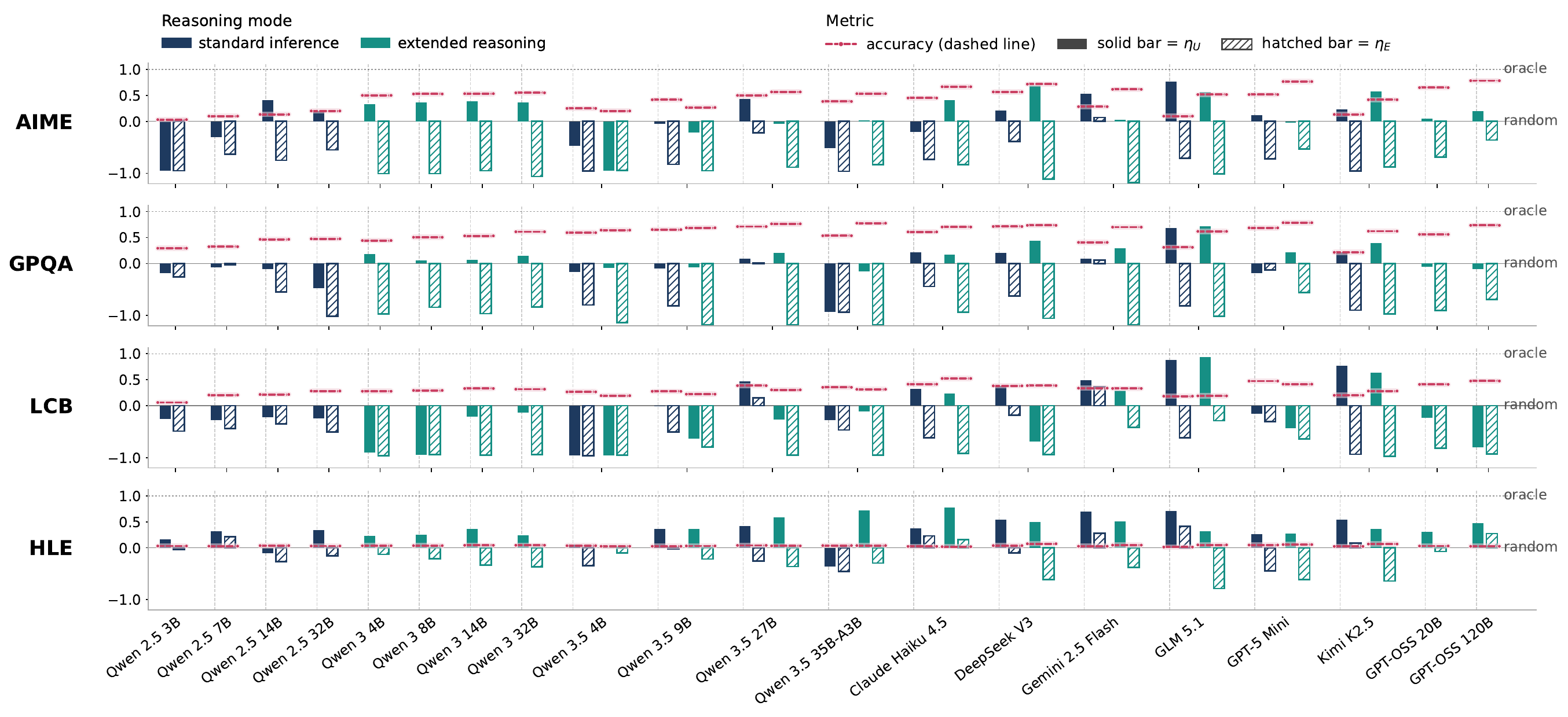}
    \caption[Triage efficiency at $\alpha = 0.5$.]{%
    \textbf{Triage efficiency across models and benchmarks at moderate budget ($\alpha = 0.5$).}
    Solid bars show $\eta_U$ (unconstrained regime, advisory allocations), hatched bars show $\eta_E$ (constrained regime, binding allocations), and red dashed lines mark base accuracy. $\eta = 1$ is oracle triage; $\eta = 0$ is random; $\eta < 0$ is worse than random. Bar color distinguishes \textcolor[HTML]{1E3A5F}{standard inference} from \textcolor[HTML]{168F84}{extended reasoning}.}
    \label{fig:triage_efficiency}
\end{figure*}

\begin{figure*}[t]
    \centering
    \includegraphics[width=\textwidth]{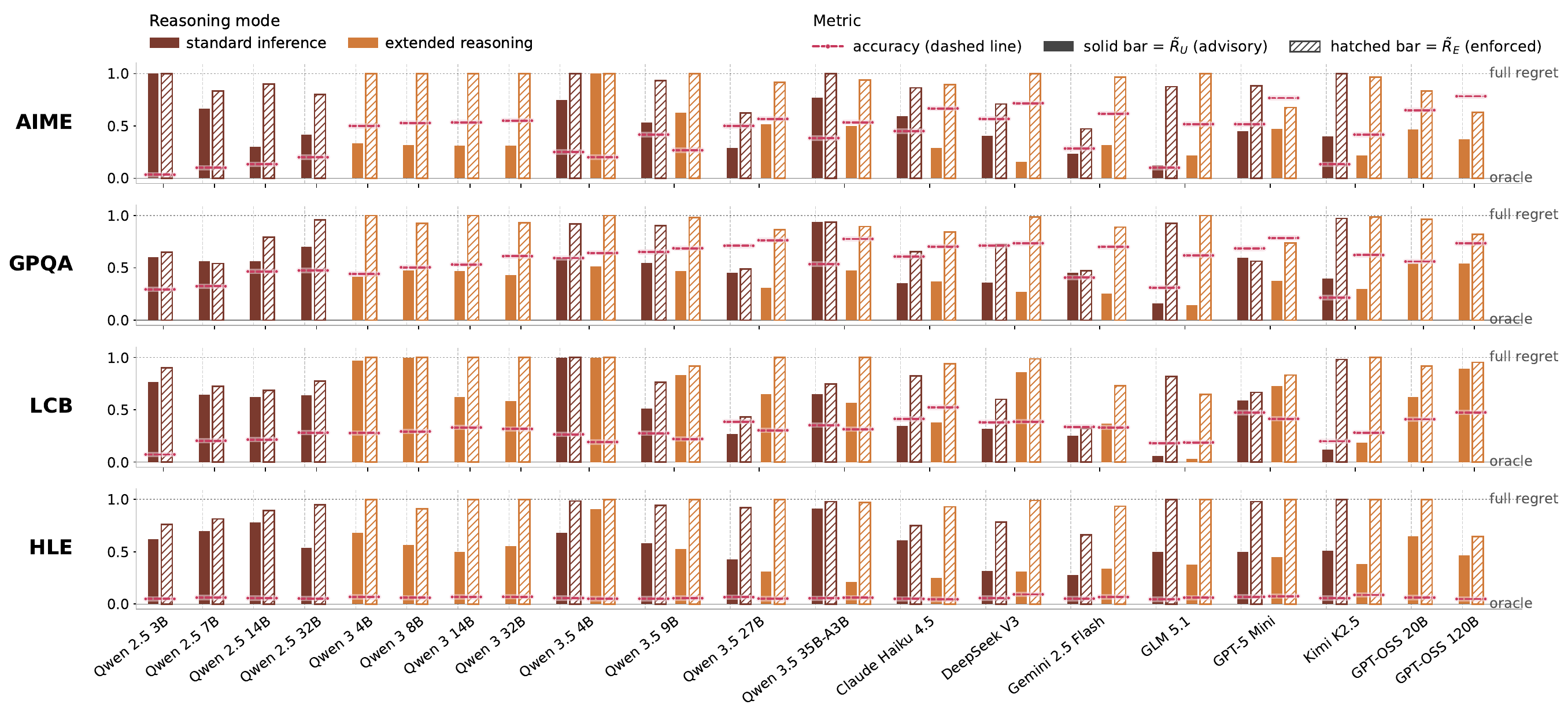}
    \caption[Triage regret at $\alpha = 0.5$.]{%
    \textbf{Normalized triage regret across models and benchmarks at moderate budget ($\alpha = 0.5$).}
    Solid bars show $\tilde{R}_U = (V_\mathrm{oracle} - V_U) / V_\mathrm{oracle}$ (unconstrained regime), hatched bars show $\tilde{R}_E$ (constrained regime), and red dashed lines mark base accuracy. $\tilde{R} = 0$ is oracle (no value lost); $\tilde{R} = 1$ is full regret (no value captured). Lower is better.}
    \label{fig:triage_regret}
\end{figure*}

Figures~\ref{fig:triage_efficiency} and~\ref{fig:triage_regret} report triage at $\alpha = 0.5$, where the budget covers half the items' baseline costs and selection genuinely matters. We report both metrics: $\eta$ measures how much of the gap between an oracle planner and a random planner the model recovers, while $\tilde{R}$ measures the absolute fraction of oracle value the model fails to capture and remains well-defined whenever $V_\mathrm{oracle} > 0$. The main findings are described below.

\paragraph{Advisory triage does not transfer across benchmarks.}
$\eta_U$ in Figure~\ref{fig:triage_efficiency} varies sharply across (model, dataset) pairs and no configuration ranks at the top on all four benchmarks. Configurations that achieve high advisory efficiency on one or two datasets frequently drop near or below random on the others, showing that the ability to identify worthwhile items in one domain does not carry over reliably. Strong advisory performance is also decoupled from raw solve rate: several configurations attain high $\eta_U$ and low advisory regret $\tilde{R}_U$ on datasets where their base accuracy is low, indicating accurate self-assessment over a small set of feasible items.

\paragraph{Binding the budget breaks triage.}
The gap between $\eta_U$ (solid) and $\eta_E$ (hatched) in Figure~\ref{fig:triage_efficiency} is the most consistent pattern across models. At $\alpha = 0.5$, $\eta_E$ is negative for the large majority of configurations, and a substantial fraction reach $\tilde{R}_E$ near the full-regret bound in Figure~\ref{fig:triage_regret}, meaning the model captures essentially no value once its own per-item allocations are enforced. The collapse is sharpest on the reasoning-heavy benchmarks (AIME, GPQA, LCB), where binding the budget cuts off solver outputs before they can produce an answer. HLE is the only benchmark where any extended-reasoning configuration attains clearly positive $\eta_E$. A controlled re-solve experiment confirms this behaviorally: models rarely honor the budgets they themselves declare even when explicitly instructed to (Appendix~\ref{app:budget_aware}).

\paragraph{Plans and execution diverge under enforcement.}
Across all 30 configurations evaluated, only one (Gemini 2.5 Flash in standard inference) achieves positive $\eta_E$ on every benchmark, and only two (Gemini 2.5 Flash standard and Qwen 3.5 27B standard) achieve $\tilde{R}_E < 0.5$ on more than one benchmark. Both belong to the standard-inference variant of their respective model. Within paired-mode comparisons, the standard variant achieves a lower enforced-regime regret than its extended-reasoning counterpart on the majority of datasets, indicating that extended reasoning tends to produce longer outputs that exceed the planner's own allocations and so collapse under enforcement.

\paragraph{Extended reasoning lifts accuracy without consistently lifting triage.}
Within the paired-mode models, base accuracy (red dashed line) increases with extended reasoning on most (model, dataset) pairs, and on some datasets the lift is large. Advisory regret $\tilde{R}_U$, however, follows no such pattern: extended reasoning leaves it essentially unchanged or worsens it on roughly half of paired configurations. The pattern that does emerge consistently is the failure mode of the previous paragraph: extended reasoning enlarges the gap between the advisory and enforced regimes, because longer traces are less likely to fit within the model's own allocation. Object-level capability and metacognitive control thus dissociate empirically: a longer trace can solve more problems without making the planner any better at deciding which ones to attempt, and can substantially hurt the planner's ability to execute within its own budget.

\paragraph{Triage quality does not scale with parameter count.}
Within Qwen 2.5, the largest 32B model is the worst on GPQA in the entire family and the family shows no upward trend with size on either GPQA or LCB. Within Qwen 3.5, the dense 27B beats the larger mixture-of-experts variant in standard mode on every benchmark. Among GPT-OSS, the 120B underperforms the 20B by a wide margin on LCB. No family in our evaluation exhibits a clean upward trend across all four benchmarks. Triage quality, as measured here, does not follow the standard scaling trends observed for raw accuracy, and may require capabilities that grow only weakly, or not at all, with parameter count.

\vspace{-10pt}
\paragraph{Behavior across budget pressure.}
Full per-(model, $\alpha$, dataset) breakdowns appear in the appendix: accuracy in \cref{fig:tab_accuracy}, advisory-regime efficiency $\eta_U$ in \cref{fig:tab_eta_U}, and enforced-regime efficiency $\eta_E$ in \cref{fig:tab_eta_E}; per-$\alpha$ versions of Figures~\ref{fig:triage_efficiency} and~\ref{fig:triage_regret} are in \cref{fig:tab_eta_U}. The findings at $\alpha = 0.5$ are broadly stable across the other budget levels we evaluate: the same configurations retain advisory-regime leadership on multiple benchmarks, and the $\eta_U$-$\eta_E$ gap persists at every budget, with enforced efficiency remaining negative for most configurations on the reasoning-heavy benchmarks. At $\alpha = 1.0$ the efficiency metric saturates because the oracle-random gap approaches zero in many configurations, whereas the regret formulation remains numerically stable there. Across the models and benchmarks we evaluate, no single configuration is simultaneously a strong monitor and a strong controller on more than one benchmark. Prospective metacognitive control, in the joint sense of \S\ref{sec:prospective-metacognitive-control}, remains an open capability gap.

\subsection{Recognizing Unsolvable Problems}
\label{sec:unsolvable-injection}

\begin{figure*}[t]
    \centering
    \includegraphics[width=0.8\textwidth]{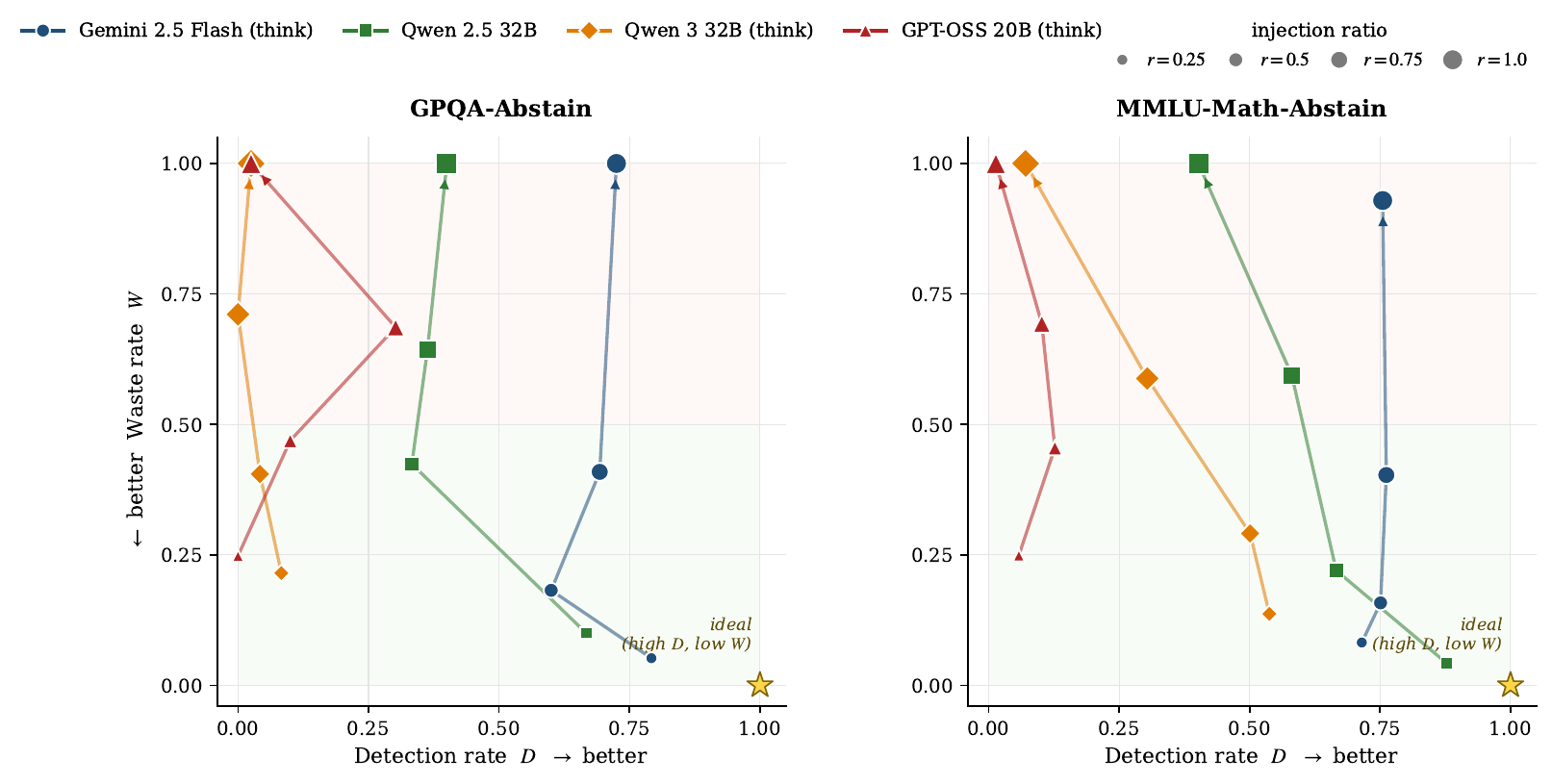}
    \caption[Trajectories under unsolvable injection.]{%
    \textbf{Trajectories through $(D, W)$ space as the unsolvable-injection
    ratio $r$ increases from $0.25$ to $1.00$ (marker size).} The star marks
    the ideal corner: high detection rate, low waste rate.}
    \label{fig:unsolvable_trajectory}
\end{figure*}

Effective planning requires recognizing tasks that cannot be solved.
Tokens spent on an unsolvable task produce no reward and reduce the
budget available for solvable ones, so the ability to identify and
exclude infeasible items is itself a metacognitive skill worth
measuring. We probe sensitivity to infeasible items of selected
models by injecting unsolvable problems from
AbstentionBench~\citep{kirichenko2025abstentionbench} (GPQA-Abstain,
MMLU-Math-Abstain) into the GPQA and MMLU-Math pools at substitution
ratios $r \in \{0, 0.25, 0.50, 0.75, 1.00\}$, holding pool size constant.
We report detection rate $D$ (fraction of unsolvables correctly left out
of the plan) and waste rate $W$ (fraction of the plan's tokens spent on
unsolvables); see \S\ref{metric}. Figure~\ref{fig:unsolvable_trajectory}
traces each ablation model through $(D, W)$ space; the main findings are
described below.

\paragraph{Recognition of unsolvables splits the cohort.}
Two models recognize unsolvables consistently across injection ratios:
Gemini 2.5 Flash, whose detection stays in $[0.60, 0.79]$ across all
ratios on both datasets, and Qwen 2.5 32B, whose detection stays at or
above $0.33$ across all ratios. The remaining two models -- Qwen 3 32B
and GPT-OSS 20B, both reasoning-trained -- detect substantially fewer
unsolvables. Their detection lies in $[0.00, 0.30]$ on GPQA-Abstain
across all ratios. On MMLU-Math-Abstain, GPT-OSS 20B's detection
remains below $0.13$ at every ratio (max $0.127$ at $r=0.50$), while
Qwen 3 32B's detection declines monotonically from $0.54$ at $r=0.25$
to $0.07$ at $r=1.0$. The Qwen pair provides the cleanest within-family
comparison: under standard inference, the model recognizes the majority
of unsolvables at low injection ($D = 0.67$ GPQA, $0.88$ MMLU-Math at
$r=0.25$); under extended reasoning, the same architecture detects
substantially less ($D = 0.08$ GPQA, $0.54$ MMLU-Math at the same
$r=0.25$). The within-Qwen pattern reproduces under portfolio planning
what \citet{kirichenko2025abstentionbench} report for isolated
abstention -- that reasoning training can reduce sensitivity to
infeasibility. We note that this is a within-family effect: Gemini 2.5
Flash, also reasoning-augmented, retains detection across the range,
suggesting the relationship between reasoning training and abstention
is not uniform across models.

\paragraph{The dependence of detection on injection ratio.}
We characterize each model by how its detection rate varies with the
injection ratio. Gemini's detection is approximately invariant to
injection: it spans $0.05$ on MMLU-Math-Abstain and $0.19$ on
GPQA-Abstain across the four non-trivial ratios. Qwen 2.5 32B's
detection declines monotonically with injection on MMLU-Math-Abstain
($0.88 \to 0.67 \to 0.58 \to 0.40$) but follows a non-monotonic course
on GPQA-Abstain (sharp drop after $r=0.25$, then approximately flat).
The two reasoning-trained models hold detection rate below $0.50$
across every cell on GPQA-Abstain and at every $r \geq 0.75$ on
MMLU-Math-Abstain, indicating allocations that largely fail to
distinguish unsolvable items from solvable ones.

\paragraph{Cross-domain consistency of detection.}
Three of four models detect unsolvables more reliably on the
math-derived dataset than on the science-derived one, with mean
differences of $+0.19$ (Qwen 2.5 32B) and $+0.32$ (Qwen 3 32B).
Gemini's gap is small ($+0.04$). GPT-OSS 20B is the exception: its
detection is uniformly low on both datasets, with a mean difference of
$-0.03$, leaving no informative comparison.

\paragraph{Full injection isolates detection from solving ability.}
At full injection every item is infeasible and the optimal plan is
empty, so solving ability cannot contribute to detection. The cohort
split persists. Gemini continues to exclude roughly three-quarters of
items from the plan ($D = 0.73$ GPQA, $0.76$ MMLU-Math at $r=1.0$);
Qwen 2.5 32B excludes a non-trivial fraction ($D = 0.40$ on both); the
two reasoning-trained models exclude almost none ($D < 0.08$ in all
four cells), continuing to allocate budget under conditions where any
allocation is wasted by construction. Full injection also partially
removes between-domain difficulty as a source of variance, leaving the
model's tendency to commit budget under certain failure as the main
residual axis distinguishing the cohort.

\medskip

%% file: sections/conclusion.tex

Deploying language models as autonomous agents requires a capability
that single-task evaluation does not test: prospective metacognitive
control over which problems to attempt, in what order, and with what
allocation, committed before any execution feedback. TRIAGE evaluates
this capability by scoring committed plans against an oracle with full
knowledge of model-specific solvability and cost.

Across 20 models and four domains, object-level capability and
metacognitive control dissociate. Extended reasoning improves accuracy
without improving triage efficiency. Self-assessment does not transfer
reliably across domains. Models select which problems to attempt more
reliably than they commit token budgets they can honor under
enforcement. Reasoning training reduces, rather than improves,
sensitivity to infeasible items at the portfolio level. Because the
failure is prospective, execution-time monitoring cannot repair it.
Reliable upfront planning is therefore a prerequisite, not a refinement,
for agents that allocate their own compute under finite resources.

%% file: sections/limitations.tex

TRIAGE measures one specific competence - prospective, portfolio-level resource allocation under a hard shared budget - and the design choices required to make this competence measurable bound what can be concluded from it.

\paragraph{Construct and design.}
TRIAGE operationalizes metacognitive control as a single up-front plan: the model commits to selections, cost estimates, and an execution order before any task is attempted. This isolates prospective self-assessment from in-the-loop revision, at the price of excluding settings where allocation is naturally interactive or streaming. We treat this trade as deliberate - the construct is most informative when revision is unavailable - but TRIAGE scores should not be read as a global verdict on a model's resource-allocation ability across deployment regimes. The budget is also denominated in a single resource per track (token cost in the reasoning track, tool calls in the agentic track); evaluations conditioned on wall-clock latency, monetary cost, or compound resources may rank models differently.

\paragraph{Ground truth and measurement noise.}
Per-task feasibility and cost are pre-computed by running each model on each item once, at temperature~0, prior to evaluation. This makes per-cell outcomes approximately deterministic at the provider/runtime level, so we do not report bootstrap error bars on per-cell $\eta$. Residual noise persists from provider-side non-determinism and from items near a model's capability frontier, where small perturbations can flip $y_i$. The unsolvable-injection slice further depends on injected items being genuinely unsolvable for the evaluated models; we audit this qualitatively but cannot exclude occasional successful completions on a small fraction.

\paragraph{External validity.}
Our model roster is a snapshot of frontier systems at submission time and will be partially superseded within months. We have no access to closed-model training data and cannot certify that TRIAGE items were held out of any model's pre- or post-training. Tasks are sourced from public reasoning and agentic benchmarks, which biases the picture toward English-language, formally-stated problems; conclusions about portfolio-level metacognition do not automatically transfer to multilingual, multi-modal, or open-ended generation regimes. Models also lack episodic memory of their own past performance on TRIAGE-like tasks; their ``self''-assessment is therefore closer to assessment of the model class than to grounded recollection.

\paragraph{Broader impacts.}
The intended positive impact of TRIAGE is diagnostic: surfacing systematic overconfidence and miscalibration in resource decisions before LLMs are placed in deployments where compute, money, or human attention are at stake. The principal risk is dual-use of the artifact itself. Used unmodified as a training signal, TRIAGE could be optimized against, yielding models that score well without genuinely improved monitoring - a known failure mode of any sufficiently legible benchmark. We caution that strong TRIAGE scores are not a license to delegate consequential decisions to LLMs: passing the benchmark is necessary, not sufficient, for such delegation.

%% file: sections/appendix_additional.tex





\section{Planner Prompt}
\label{app:prompt}

The planner receives the prompt template shown below. The prompt is held fixed across
both regimes (U and E); regime~U treats the per-problem allocations
as advisory, while regime~E enforces them as hard constraint
(\S\ref{eta_u}).

\begin{plannerprompt}
You are given a set of {n} problems and a total budget of {B_local} output tokens. Your task is to decide which problems to attempt, in what order, and how many tokens to allocate to each.

Constraints:
  - Each problem is worth 1 point. Your goal is to maximize the total points earned within the budget.
  - Each problem is attempted in an independent model session. The tokens you allocate to a problem are spent entirely within that single session, with no context, reasoning, or scratchpad carried over between problems.
  - The sum of your per-problem token allocations must not exceed {B_local}.
  - You may choose to attempt a strict subset of the problems. Problems you do not select receive 0 tokens and 0 points.

Problems:
{problems}

Return a single JSON object with the following schema:
{
  "plan": [
    {"id": <problem_id>, "tokens": <int>},
    ...
  ]
}

The order of items in "plan" is the order in which the problems will be attempted. Items omitted from "plan" are not attempted. Output only the JSON object, with no additional commentary.
\end{plannerprompt}

\paragraph{Problem block format.}
Each problem in the \texttt{\{problems\}} field is rendered as:

\begin{lstlisting}[
  basicstyle=\ttfamily\footnotesize,
  breaklines=true,
  frame=single,
  framesep=6pt,
  backgroundcolor=\color{gray!5},
  showstringspaces=false,
  columns=fullflexible,
  keepspaces=true,
  xleftmargin=0pt,
  xrightmargin=0pt,
  linewidth=\linewidth
]
[id: <problem_id>] (points: 1)
<problem_text>
\end{lstlisting}

\paragraph{Plan parsing and repair.}
Plan outputs are parsed as JSON. A small fraction of outputs
contain recoverable formatting issues (trailing commentary outside
the JSON object, extra fields, malformed numerics); these pass
through a deterministic repair pass that strips non-JSON content,
coerces token counts to non-negative integers, and discards plan
items whose \texttt{id} does not appear in the pool. Plans that
remain unparseable after repair are excluded; pools with no
parseable plan for a model are excluded from that model's $\eta$
average for the affected (dataset, $\alpha$) cell.

\section{Additional Experiments}

\begin{figure}[ht]
  \centering
  \includegraphics[width=0.55\linewidth]{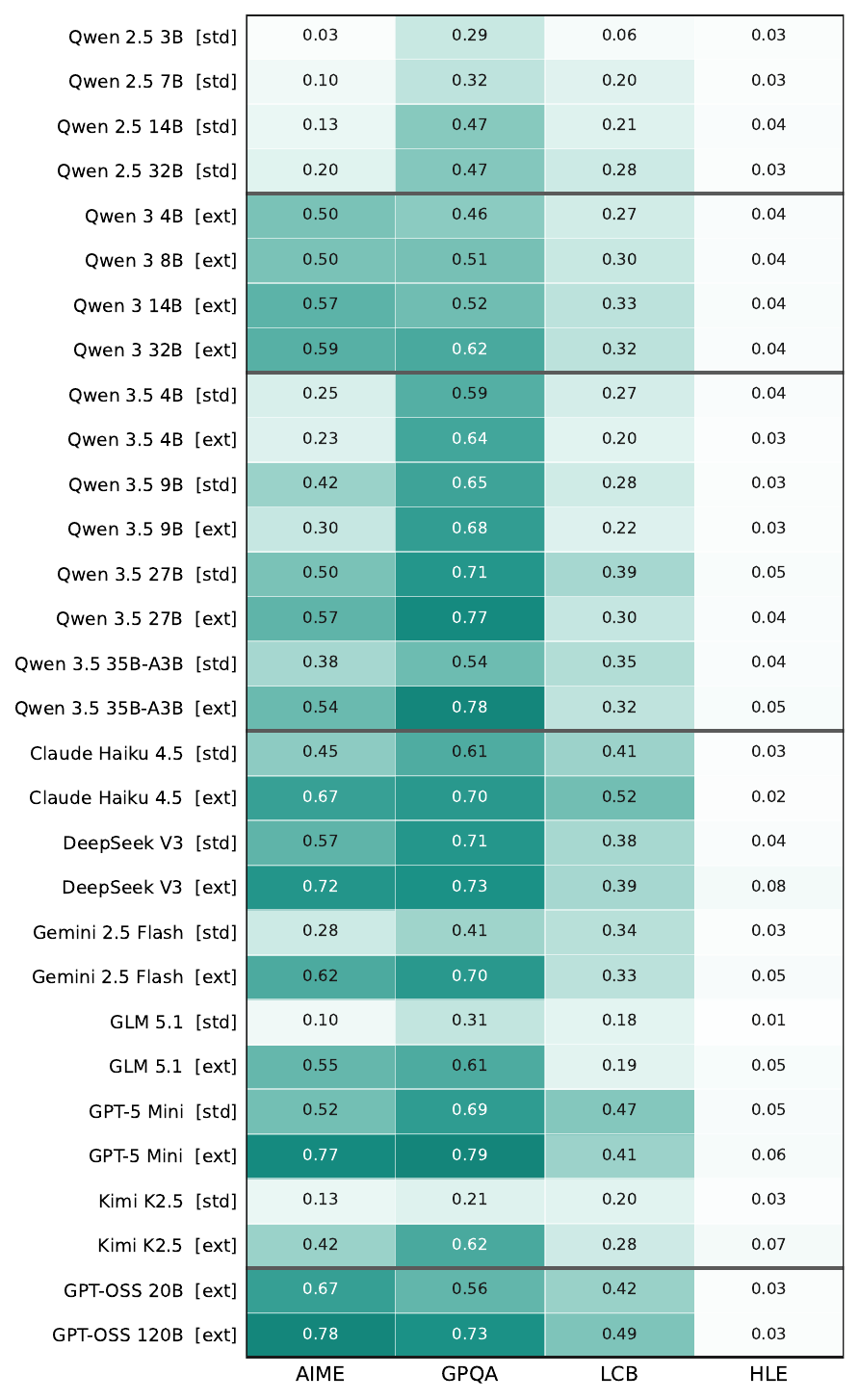}
  \caption{Per-(model, mode) accuracy on each benchmark. Accuracy is the
    proportion of solvable problems and is independent of budget $\alpha$.
    Cells use a sequential white$\rightarrow$teal scale; horizontal lines
    separate model families; the [std] / [ext] suffix denotes standard
    inference versus extended reasoning.}
  \label{fig:tab_accuracy}
\end{figure}
 

\begin{figure}[ht]
  \centering
  \includegraphics[width=0.78\linewidth]{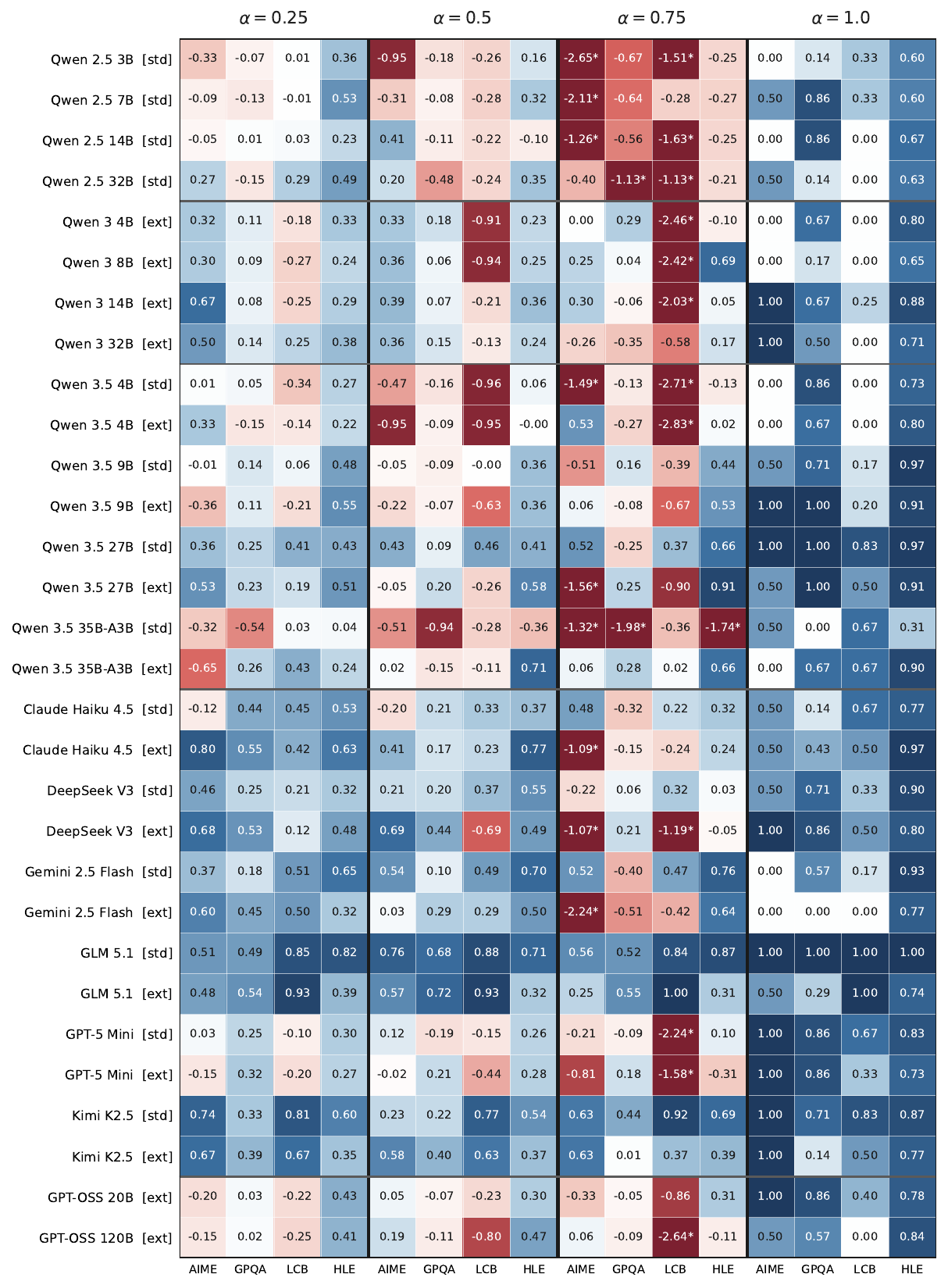}
  \caption{Triage skill in the advisory-budget regime, $\eta_U$, per
    (model, $\alpha$, dataset). $\eta_U = 1$ matches oracle planning,
    $\eta_U = 0$ matches a random baseline, $\eta_U < 0$ is worse than
    random. Cells use a diverging coral$\rightarrow$navy palette capped
    at $[-1, 1]$; values outside this range are marked with~$*$.
    Bold vertical rules separate $\alpha$ blocks.}
  \label{fig:tab_eta_U}
\end{figure}

\begin{figure}[ht]
  \centering
  \includegraphics[width=0.78\linewidth]{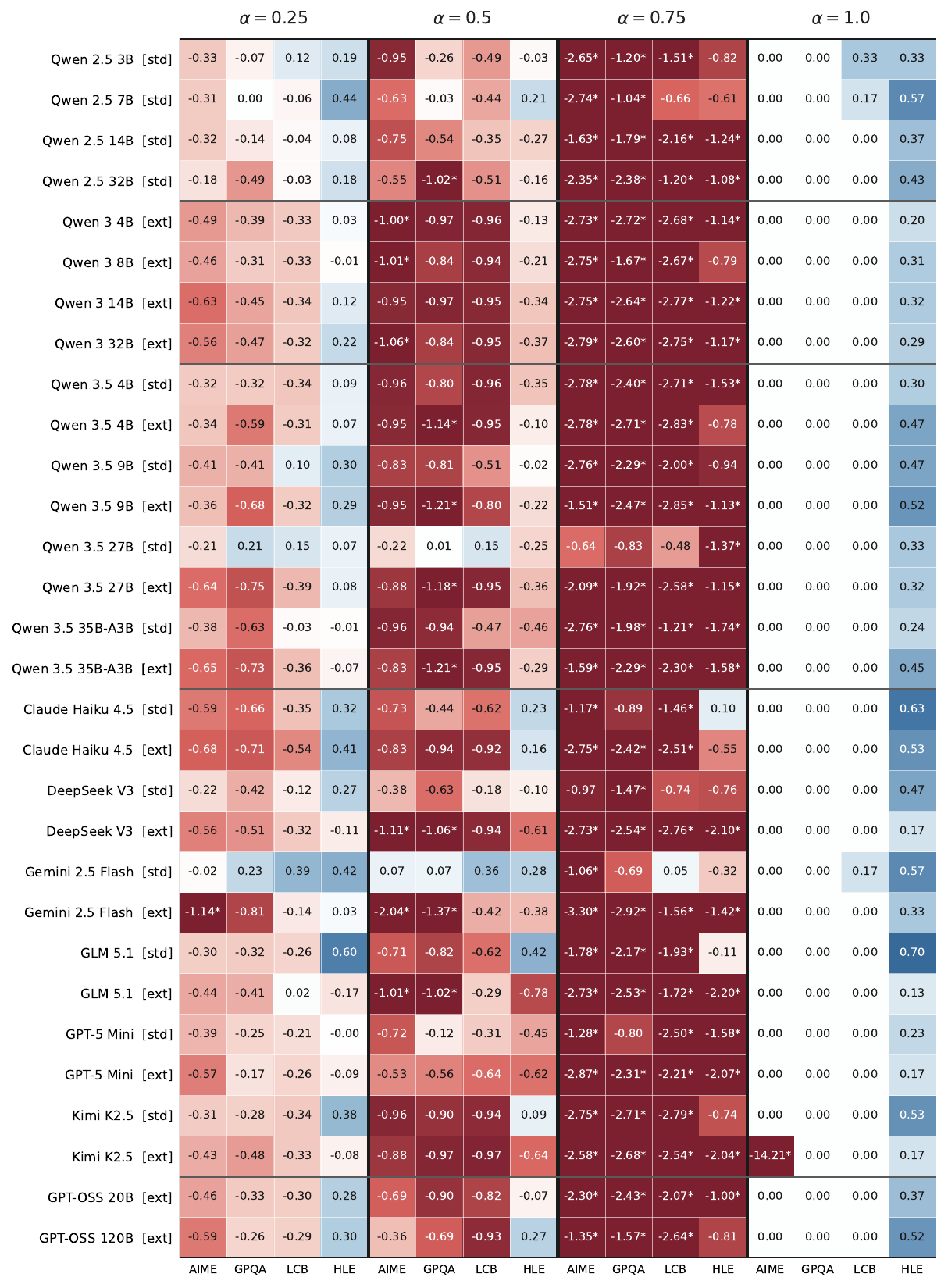}
  \caption{Triage skill in the enforced-budget regime, $\eta_E$, per
    (model, $\alpha$, dataset). Color encoding identical to
    \cref{fig:tab_eta_U}. The $\alpha = 1.0$ column block mostly saturates
    near zero because $V_\mathrm{oracle} \approx V_\mathrm{random}$
    at full budget.}
  \label{fig:tab_eta_E}
\end{figure}

\subsection{Budget-aware re-solving: do models honor their own allocations?}
\label{app:budget_aware}
 
The triage protocol measures \emph{whether} a model can
identify worthwhile problems given a fixed token budget; it does not test
whether the model can actually execute under its own self-declared per-problem
allocation $a_i$. We probe this directly with a \textbf{budget-aware re-solve}
experiment on three models - Gemini 2.5 Flash, Kimi K2.5, and GLM 5.1, all in
standard-inference mode. For every (model, problem) where the $\alpha = 1.0$
plan selected the problem with $a_i > 0$, we re-issue the standard solver
prompt with an instruction stating the model's own $a_i$:

\begin{promptbox}
\small\ttfamily
You have a strict output-token budget of \texttt{\{a\_i\}} tokens
for solving this problem, including any reasoning, drafts, or
scratch work you produce. Plan your solution to fit within this
limit. Be concise, skip unnecessary explanation, and commit to your
final answer before exhausting the budget.
\end{promptbox}
 
The API-level \texttt{max\_tokens} is unchanged from the baseline run, so this
imposes a \emph{self-imposed} cap rather than a hard truncation. We
record two outcomes per problem: \textbf{Acc}$_{\text{baseline}}$ (the
original baseline solve, no banner, full \texttt{max\_tokens}) and
\textbf{Acc}$_{\text{budget-aware}}$ (with the banner). We also report
\textbf{compliance}, defined as the fraction of problems for which the
model's actual output length stays within its self-declared budget, i.e.
$\text{compliance} = \tfrac{1}{N} \sum_{i=1}^{N} \mathbb{1}[\,\text{output\_tokens}_i \le a_i\,]$.
Because no hard cap is applied, compliance measures whether the model
\emph{voluntarily} respects the budget statement. Finally, we decompose the
change between the baseline and budget-aware runs into four pairwise outcome
counts: \textit{kept correct} (correct in both runs), \textit{lost correct}
(correct at baseline, wrong with the banner), \textit{newly correct} (wrong
at baseline, correct with the banner), and \textit{still wrong} (incorrect
in both runs).
The aggregate counts are reported in \cref{fig:ba_overall}; the per-dataset
breakdown is in \cref{fig:ba_per_dataset}.
 
\paragraph{Stating the budget improves accuracy, but compliance is poor.}
For all three models, Acc$_{\text{budget-aware}}$ exceeds
Acc$_{\text{baseline}}$ in aggregate
($+1.9$ pp for Gemini 2.5 Flash, $+4.9$ pp for Kimi K2.5, and $+8.3$ pp for
GLM 5.1; \cref{fig:ba_overall}). Yet only Gemini 2.5 Flash exhibits a
meaningful compliance rate ($36.6\%$); Kimi K2.5 stays within its
self-declared budget on only $6.0\%$ of problems and GLM 5.1 on $16.0\%$.
The accuracy gains therefore arise even when the model overruns its stated
budget - the budget-aware instruction functions more as a \emph{conciseness prior} than as a
binding constraint. This is consistent with the gap between the advisory
and enforced regimes reported in \S\ref{sec:results}: models can plan
against a budget statement but rarely execute within it.
 
\paragraph{Per-dataset patterns.}
\cref{fig:ba_per_dataset} shows that the budget-aware lift is concentrated
on the structured-reasoning datasets (AIME, GPQA, LCB) where models also
have higher baseline accuracy. On HLE, all three models remain near floor
in every column. GLM 5.1's lift on AIME is the largest single effect in
the experiment: Acc$_{\text{baseline}}=0.10 \rightarrow
\text{Acc}_{\text{budget-aware}}=0.61$. This is consistent with GLM 5.1's leading $\eta_U$ in
the main results: when the model is told to be concise, it commits to its
final answer earlier and avoids the long, drifting traces that cost it
correctness in the unconstrained setting.

\begin{figure}[ht]
  \centering
  \includegraphics[width=\linewidth]{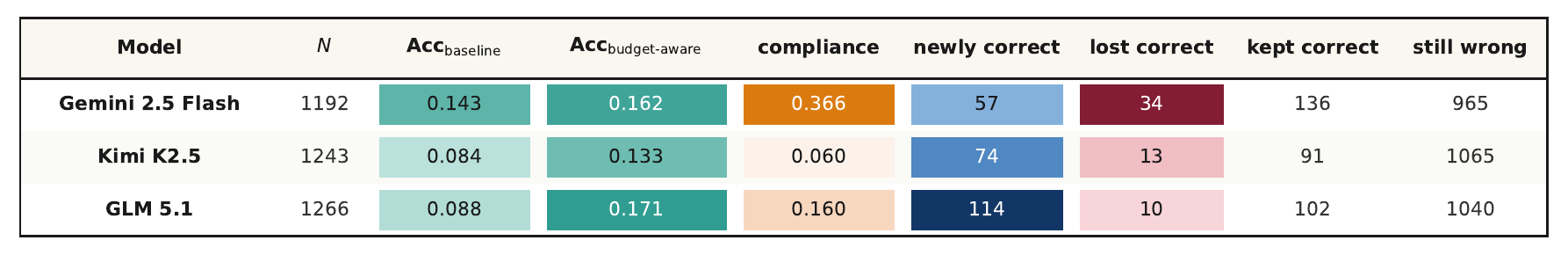}
  \caption{\textbf{Budget-aware re-solve, aggregate per model.}
    Cells colored by intensity within each metric. $N$ is the number of
    (problem, allocation) pairs re-issued. \emph{Compliance} is the
    fraction of problems for which the model's actual output length
    stays within its self-declared budget $a_i$. The four right-hand counts (newly correct, lost correct,
    kept correct, still wrong) sum to $N$ and decompose the change
    between Acc$_\text{baseline}$ (original baseline solve) and
    Acc$_\text{budget-aware}$ (baseline solver prompt augmented with an instruction to answer within the model's own $a_i$).}
  \label{fig:ba_overall}
\end{figure}
 
\begin{figure}[ht]
  \centering
  \includegraphics[width=0.85\linewidth]{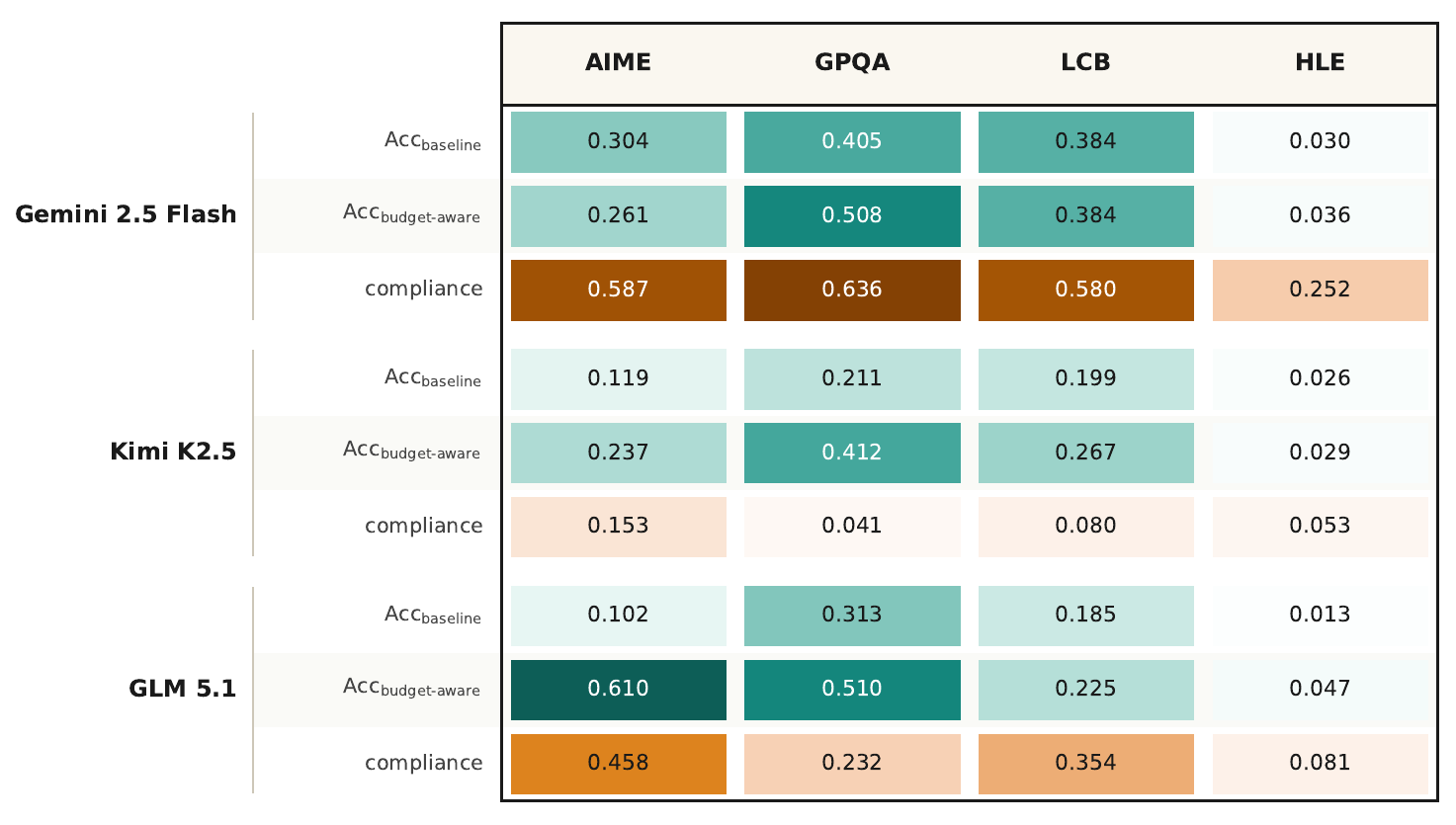}
  \caption{\textbf{Budget-aware re-solve, per (model $\times$ dataset).}
    HLE sub-domains are aggregated by problem-count weighted mean. Rows
    within each model block: original baseline accuracy, accuracy with
    the budget banner, and compliance rate. Color encoding follows
    \cref{fig:ba_overall}.}
  \label{fig:ba_per_dataset}
\end{figure}


\providecommand{\etaU}{\ensuremath{\eta_U}}
\providecommand{\etaE}{\ensuremath{\eta_E}}

\subsection{Prompt-Sensitivity Ablation}
\label{app:prompt-sensitivity}

To check whether $\eta$ depends on the prompt, we re-ran the planning
step under three framings of the same task:
A (neutral, used elsewhere in the paper),
B (autonomous agent), and
C (timed exam). The Rules block, problem text, budget, output schema,
and ordering instruction are identical across the three; only the
opening paragraph differs.

\begin{triageboxprompt}{Variant A --- Neutral (default)}
You are given a set of \texttt{\{domain\}} to attempt under a fixed
output-token budget. You do not have to attempt every problem.
Deciding which problems to attempt and in what order is part of the
task.
\end{triageboxprompt}
 
\begin{triageboxprompt}{Variant B --- Autonomous Agent}
You are an autonomous planning agent operating under a fixed compute
budget of \texttt{\{B\}} output tokens. A set of \texttt{\{domain\}}
has been assigned to you. You cannot expect to complete every
assignment within budget. Your job is to select which to execute and
in what order so as to maximize total reward. Some assignments may be
infeasible for you --- committing tokens to those yields no reward.
\end{triageboxprompt}
 
\begin{triageboxprompt}{Variant C --- Timed Exam}
You are taking a timed exam with a fixed output-token budget of
\texttt{\{B\}} across all questions. A set of \texttt{\{domain\}} is
in front of you. You do not have to attempt every question --- some
may be too difficult or unanswerable within the time available.
Decide which questions to attempt and in what order so as to maximize
your total score.
\end{triageboxprompt}

We ran the full grid: 2~models, 2~modes, 8~datasets, 4~budget levels,
3~variants - 16 (model, mode, $\alpha$) cells per variant under each
regime.


\paragraph{Results.}
For $\etaU$, 11 of 16 cells stay within 0.1 across variants; for
$\etaE$, 14 of 16 (Tables~\ref{tab:variant-eta-u}, \ref{tab:variant-eta-e}).
The two largest spreads in $\etaU$ are both in think-mode at extreme
budgets: Kimi~K2.5/think at $\alpha\!=\!1.0$ (range~$0.383$, with
Variant~C alone dropping to $0.506$) and Qwen~3.5~27B/think at
$\alpha\!=\!0.75$ (range~$0.188$).

The $\alpha\!=\!0.75$ case is not random. Variant~A gives the lowest
$\etaU$ in 3 of 4 think-mode cells at $\alpha \geq 0.5$. Variants~B
and C tell the model some problems may be infeasible; A does not.
Reasoning models seem to need that cue - without it they attempt
too many problems. Non-thinking modes show no such pattern (median
range~$0.040$).

\paragraph{Ranking stability.}
Table~\ref{tab:variant-tau} reports Kendall's $\tau_b$ on the
four-model ranking induced by each variant pair. With $n=4$, $\tau_b$
can only take the values $\{-1, -2/3, -1/3, 0, 1/3, 2/3, 0.913, 1\}$:
a single rank swap drops $\tau$ from~$1$ to~$0.667$. We therefore
report raw $\tau$ rather than apply a threshold. All 24 pairwise
values are $\geq 2/3$, and the swapped models differ by $<\!0.1$ in
$\eta$ - within the cell-level spread we already report. The
rankings are stable.

Taken together - 78\% of cells within $0.1$ spread, all
$\tau \geq 2/3$, and the largest disagreement isolated to one
(model, mode, $\alpha$) corner - $\etaU$ is robust to the prompt
framings tested.

\begin{table}[t]
\centering
\small
\caption{$\etaU$ under each prompt variant. Each value is the mean
over 8 data splits (\emph{AIME}, \emph{GPQA Diamond}, \emph{LiveCodeBench},
\emph{HLE-CS/AI}, \emph{HLE-Physics}, \emph{HLE-Humanities},
\emph{HLE-Other}, \emph{HLE-Engineering}) at that $\alpha$. \textbf{range} = $\max - \min$
across A, B, C; cells with range $\geq 0.10$ are bolded.}
\label{tab:variant-eta-u}
\begin{tabular}{llrrrrr}
\toprule
Model & Mode & $\alpha$ & $\eta^A_U$ & $\eta^B_U$ & $\eta^C_U$ & range \\
\midrule
\multirow{8}{*}{Kimi K2.5}
 & \multirow{4}{*}{nothink}
   & 0.25 & 0.615 & 0.613 & 0.575 & 0.040 \\
 & & 0.50 & 0.530 & 0.659 & 0.554 & \textbf{0.130} \\
 & & 0.75 & 0.681 & 0.781 & 0.776 & \textbf{0.100} \\
 & & 1.00 & 0.978 & 1.000 & 1.000 & 0.022 \\
\cmidrule(lr){2-7}
 & \multirow{4}{*}{think}
   & 0.25 & 0.432 & 0.497 & 0.412 & 0.085 \\
 & & 0.50 & 0.463 & 0.477 & 0.526 & 0.063 \\
 & & 0.75 & 0.333 & 0.363 & 0.271 & 0.092 \\
 & & 1.00 & 0.778 & 0.889 & 0.506 & \textbf{0.383} \\
\midrule
\multirow{8}{*}{Qwen 3.5 27B}
 & \multirow{4}{*}{nothink}
   & 0.25 & 0.446 & 0.428 & 0.445 & 0.018 \\
 & & 0.50 & 0.379 & 0.353 & 0.443 & 0.090 \\
 & & 0.75 & 0.507 & 0.523 & 0.556 & 0.049 \\
 & & 1.00 & 0.933 & 0.956 & 0.933 & 0.022 \\
\cmidrule(lr){2-7}
 & \multirow{4}{*}{think}
   & 0.25 & 0.414 & 0.361 & 0.391 & 0.053 \\
 & & 0.50 & 0.333 & 0.448 & 0.334 & \textbf{0.115} \\
 & & 0.75 & 0.287 & 0.415 & 0.475 & \textbf{0.188} \\
 & & 1.00 & 0.889 & 0.889 & 0.911 & 0.022 \\
\midrule
\multicolumn{6}{l}{\emph{Cells with range $<0.10$:}} & 11 / 16 \\
\multicolumn{6}{l}{\emph{Median range:}} & 0.077 \\
\bottomrule
\end{tabular}
\end{table}

\begin{table}[t]
\centering
\small
\caption{$\etaE$ under each prompt variant, same layout as
Table~\ref{tab:variant-eta-u}. Negative values reflect systematic
under-allocation in the enforced regime - a model property, the
same under all three prompts.}
\label{tab:variant-eta-e}
\begin{tabular}{llrrrrr}
\toprule
Model & Mode & $\alpha$ & $\eta^A_E$ & $\eta^B_E$ & $\eta^C_E$ & range \\
\midrule
\multirow{8}{*}{Kimi K2.5}
 & \multirow{4}{*}{nothink}
   & 0.25 &  0.152 &  0.152 &  0.148 & 0.004 \\
 & & 0.50 & $-0.254$ & $-0.232$ & $-0.245$ & 0.022 \\
 & & 0.75 & $-1.421$ & $-1.352$ & $-1.421$ & 0.069 \\
 & & 1.00 &  0.356 &  0.356 &  0.356 & 0.000 \\
\cmidrule(lr){2-7}
 & \multirow{4}{*}{think}
   & 0.25 & $-0.192$ & $-0.194$ & $-0.237$ & 0.045 \\
 & & 0.50 & $-0.712$ & $-0.692$ & $-0.700$ & 0.020 \\
 & & 0.75 & $-2.193$ & $-2.211$ & $-2.282$ & 0.089 \\
 & & 1.00 & $-0.367$ & $-0.433$ & $-0.532$ & \textbf{0.165} \\
\midrule
\multirow{8}{*}{Qwen 3.5 27B}
 & \multirow{4}{*}{nothink}
   & 0.25 &  0.089 &  0.157 &  0.143 & 0.068 \\
 & & 0.50 & $-0.161$ & $-0.152$ & $-0.159$ & 0.008 \\
 & & 0.75 & $-1.180$ & $-1.113$ & $-1.151$ & 0.067 \\
 & & 1.00 &  0.222 &  0.200 &  0.222 & 0.022 \\
\cmidrule(lr){2-7}
 & \multirow{4}{*}{think}
   & 0.25 & $-0.110$ & $-0.111$ & $-0.105$ & 0.005 \\
 & & 0.50 & $-0.479$ & $-0.552$ & $-0.540$ & 0.073 \\
 & & 0.75 & $-1.568$ & $-1.597$ & $-1.782$ & \textbf{0.214} \\
 & & 1.00 &  0.311 &  0.333 &  0.244 & 0.089 \\
\midrule
\multicolumn{6}{l}{\emph{Cells with range $<0.10$:}} & 14 / 16 \\
\multicolumn{6}{l}{\emph{Median range:}} & 0.057 \\
\bottomrule
\end{tabular}
\end{table}

\begin{table}[t]
\centering
\small
\caption{Kendall's $\tau_b$ between the four-model rankings induced
by each variant pair. \emph{max range} is the largest across-variant
spread in any (model, mode) row at that $\alpha$.}
\label{tab:variant-tau}
\begin{tabular}{lrrrrrr}
\toprule
Regime & $\alpha$ & $\tau(A,B)$ & $\tau(A,C)$ & $\tau(B,C)$ &
$\min\,\tau$ & max range \\
\midrule
\multirow{4}{*}{$\etaU$}
 & 0.25 & 0.667 & 1.000 & 0.667 & 0.667 & 0.085 \\
 & 0.50 & 0.667 & 1.000 & 0.667 & 0.667 & 0.130 \\
 & 0.75 & 0.667 & 0.667 & 1.000 & 0.667 & 0.188 \\
 & 1.00 & 0.913 & 1.000 & 0.913 & 0.913 & 0.383 \\
\midrule
\multirow{4}{*}{$\etaE$}
 & 0.25 & 0.667 & 1.000 & 0.667 & 0.667 & 0.068 \\
 & 0.50 & 1.000 & 1.000 & 1.000 & 1.000 & 0.073 \\
 & 0.75 & 1.000 & 1.000 & 1.000 & 1.000 & 0.214 \\
 & 1.00 & 1.000 & 1.000 & 1.000 & 1.000 & 0.165 \\
\bottomrule
\end{tabular}
\end{table}

\section{Boundary behaviour of \texorpdfstring{$\eta$}{eta}}
\label{app:eta-boundary}

This appendix gives a formal account of the conditions under which \sm{\eta_M} in~\eqref{eq:eta} is well-defined, the convention used when the denominator vanishes, and the filtering rule applied to the main results. All statements are with respect to the setup of \S\ref{sec:problem-formulation}: integer token costs \sm{c_i \in \mathbb{Z}_{>0}}, uniform values \sm{v_i = 1}, budget \sm{B = \lfloor \alpha \sum_i c_i \rfloor}, and regime~U execution.

\subsection{Domain of definition}
\label{app:eta-domain}

The ratio in~\eqref{eq:eta} is defined whenever \sm{V_{\text{oracle}} > V_{\text{random}}}. Two structural conditions force \sm{V_{\text{oracle}} = V_{\text{random}}} and hence leave \sm{\eta_M} undefined:

\begin{description}
\item[(D1) Empty solvable set: \sm{V_{\text{oracle}} = 0}.] If \sm{y_i = 0} for every~\sm{i} in the pool, then \sm{V_M = 0} for any plan, and likewise \sm{V_{\text{random}} = 0}. The metric carries no signal: every planner - including one that does nothing - ties at zero.

\item[(D2) Slack-budget collapse at \sm{\alpha = 1}.] At \sm{\alpha = 1}, \sm{B = \sum_i c_i}. Proposition~\ref{prop:slack} shows that under regime~U every permutation of the gradeable items completes without truncation, so \sm{V_{\text{random}} = \sum_i y_i = V_{\text{oracle}}} deterministically.
\end{description}

\begin{proposition}[Truncation impossibility at full budget]
\label{prop:slack}
Let the pool have gradeable items with costs \sm{c_1, \ldots, c_n \in \mathbb{Z}_{>0}}, budget \sm{B = \sum_{i=1}^n c_i}, and any permutation \sm{(i_1, \ldots, i_n)} of \sm{\{1, \ldots, n\}}. Under regime~U, no truncation occurs and the plan executes every item.
\end{proposition}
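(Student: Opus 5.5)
We argue by induction along the plan order, tracking the remaining budget under the regime~U rule. Regime~U stops at the first item whose true cost exceeds the remaining budget. So it suffices to show that, just before item \sm{i_j} is reached, the remaining budget is at least \sm{c_{i_j}}, for every \sm{j = 1, \ldots, n}.

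Define the remaining budget before step \sm{j} as \sm{R_j = B - \sum_{l<j} c_{i_l}}; this is valid provided items \sm{i_1, \ldots, i_{j-1}} were all executed. The inductive claim is that for each \sm{j}, items \sm{i_1,\ldots,i_{j-1}} have been executed and
\begin{equation*}
R_j \;=\; \sum_{l \ge j} c_{i_l}.
\end{equation*}
The base case \sm{j=1} is immediate from \sm{B = \sum_{i=1}^n c_i}. The identity uses only that \sm{(i_1,\ldots,i_n)} is a permutation, so the executed and not-yet-executed costs partition the full sum.

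For the inductive step, the costs are positive integers, so \sm{R_j = c_{i_j} + \sum_{l>j} c_{i_l} \ge c_{i_j}}. The truncation condition "cost exceeds remaining budget" therefore fails, and item \sm{i_j} executes. After deducting \sm{c_{i_j}}, we get \sm{R_{j+1} = \sum_{l \ge j+1} c_{i_l}}. At \sm{j=n} the check reads \sm{R_n = c_{i_n}}, which is equality rather than excess, so the last item also runs, and afterwards \sm{R_{n+1}=0}.

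The only delicate point is the boundary case \sm{R_j = c_{i_j}}. Here one must confirm that the paper's stopping rule is strict ("exceeds"), which it is, so ties execute. Nonnegativity of the summed costs is all that is needed; strict positivity is not required. As a consequence, every permutation yields value \sm{\sum_i y_i}. This gives \sm{V_{\text{random}} = V_{\text{oracle}}} as used in (D2). One should note that this relies on \sm{B} equalling the sum exactly: the floor in \sm{B(M,1)} is harmless because the costs are integers.
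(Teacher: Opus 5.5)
Your proposal is correct and follows the paper's argument: both identify the remaining budget before each item with the sum of the costs still to be executed, so the strict truncation test $c_{i_j} > R_j$ reduces to a sum of nonnegative costs being negative, which never happens. Your explicit induction, your remark that ties execute under the strict ``exceeds'' rule, and your observation that nonnegativity of the costs suffices only spell out what the paper's one-line computation leaves implicit.
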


\begin{proof}
After processing the first~\sm{k} items, the remaining budget is
\[
R_k \;=\; B - \sum_{j=1}^{k} c_{i_j} \;=\; \sum_{j=k+1}^{n} c_{i_j}.
\]
For \sm{k < n}, item~\sm{i_{k+1}} truncates iff \sm{c_{i_{k+1}} > R_k}, i.e., iff \sm{\sum_{j=k+2}^{n} c_{i_j} < 0} (taking the empty sum as~\sm{0} when \sm{k = n-1}). Since \sm{c_i > 0} for all~\sm{i}, the strict inequality is never satisfied. Hence execution proceeds through all~\sm{n} items.
\end{proof}

A corollary: at \sm{\alpha = 1} the knapsack constraint in~\eqref{eq:oracle} is non-binding, so \sm{x_i = 1} for all~\sm{i} is feasible. Since the oracle objective is non-decreasing in each \sm{x_i}, this is optimal and \sm{V_{\text{oracle}} = \sum_i y_i = V_{\text{random}}}, with zero Monte-Carlo variance across the \sm{10^3} random shuffles. Combining (D1) and (D2): \sm{\eta_M} is undefined at every~\sm{\alpha} for pools with \sm{V_{\text{oracle}} = 0}, and at \sm{\alpha = 1} for all remaining pools.

\subsection{One-sided limit and implementation convention}
\label{app:eta-limit}

For pools with \sm{V_{\text{oracle}} \ge 1}, consider \sm{\eta_M} as \sm{\alpha \to 1^-}. The denominator \sm{V_{\text{oracle}}(\alpha) - V_{\text{random}}(\alpha)} is non-negative for all \sm{\alpha \in (0, 1]} (the oracle is optimal among feasible plans, so it dominates any expectation over them) and shrinks to~\sm{0} as \sm{\alpha \to 1}. Two cases:

\begin{enumerate}
\item \textbf{Achievement branch.} If \sm{V_M(\alpha) = V_{\text{oracle}}(\alpha)} on a left neighbourhood of \sm{\alpha = 1}, then \sm{\eta_M(\alpha) = 1} identically, and \sm{\lim_{\alpha \to 1^-} \eta_M(\alpha) = 1}.

\item \textbf{Failure branch.} If \sm{V_M(\alpha) < V_{\text{oracle}}(\alpha)} near \sm{\alpha = 1}, the numerator approaches \sm{V_M(1) - V_{\text{oracle}}(1) < 0} while the denominator approaches~\sm{0} from above. Hence \sm{\lim_{\alpha \to 1^-} \eta_M(\alpha) = -\infty}.
\end{enumerate}

The released implementation extends \sm{\eta_M} to cases with \sm{V_{\text{oracle}} = V_{\text{random}}} by
\begin{equation}
  \eta_M \;\mathrel{:=}\; \mathbf{1}\!\left[V_M \ge V_{\text{oracle}}\right].
  \label{eq:eta-extension}
\end{equation}
The achievement value in~\eqref{eq:eta-extension} coincides with the one-sided limit. The failure value is clipped to \sm{0} rather than reported as~\sm{-\infty}, consistent with the interpretation of \sm{\eta_M = 0} as ``no skill''. This extension is the convention used for \sm{\eta_M} throughout the paper, including pools satisfying (D1).

In practice, the largest-magnitude negative efficiency values arise at $\alpha$ near $1$, where the denominator $V_{\text{oracle}} - V_{\text{random}}$ shrinks while remaining positive, while the regret metric $\tilde{R}_M$ (\S\ref{app:regret}) provides a bounded alternative for these cases.

\subsection{Filtering rule}
\label{app:filter}

The normalized regret \sm{\tilde{R}_M} (\S\ref{app:regret}) requires \sm{V_{\text{oracle}} \ge 1} to be defined, so when we report \sm{\tilde{R}_M} we restrict to pools satisfying this condition, equivalently discarding pools under~(D1). These are pure capability-ceiling cases: the model fails every problem under standard execution, so every plan achieves \sm{V_M = V_{\text{oracle}} = 0} and the regret denominator vanishes. For \sm{\eta_M}, no such filter is applied; pools under~(D1) are retained using the convention in~\eqref{eq:eta-extension}, which assigns \sm{\eta_M = 1} when \sm{V_M \ge V_{\text{oracle}}} and \sm{V_{\text{oracle}} = 0}.

\subsection{Range}
\label{app:eta-range}

For pools where \sm{\eta_M} is genuinely informative (\sm{V_{\text{oracle}} > V_{\text{random}}}):

\begin{itemize}
\item \sm{\eta_M \le 1}, with equality iff \sm{V_M = V_{\text{oracle}}}. This follows from \sm{V_M \le V_{\text{oracle}}} (the oracle is the maximum over feasible plans) and a strictly positive denominator.
\item \sm{\eta_M} is unbounded below. As \sm{V_{\text{oracle}}} and \sm{V_{\text{random}}} approach each other (the denominator shrinks toward zero from above), \sm{\eta_M} can take arbitrarily large negative values for any fixed \sm{V_M < V_{\text{random}}}.
\end{itemize}

We therefore characterize \sm{\eta_M \in (-\infty, 1]}.

\subsection{Sensitivity check: regret as an alternative normalization}
\label{app:regret}

The asymmetric range of \sm{\eta_M} motivates a second, bounded metric for robustness: the normalized regret
\begin{equation}
  \tilde{R}_M \;=\; \frac{V_{\text{oracle}} - V_M}{V_{\text{oracle}}},
  \label{eq:regret}
\end{equation}
defined whenever \sm{V_{\text{oracle}} \ge 1}. By construction \sm{\tilde{R}_M \in [0, 1]}, with \sm{\tilde{R}_M = 0} at oracle and \sm{\tilde{R}_M = 1} when the planner captures none of the oracle's value. Unlike \sm{\eta_M}, \sm{\tilde{R}_M} has no denominator singularity, is continuous at \sm{\alpha = 1}, and admits the same interpretation across all~\sm{\alpha}. We report \sm{\tilde{R}_M} alongside \sm{\eta_M} and confirm that the conclusions are robust to the choice of normalization.

\section{Model Configurations}
\label{app:models}

Table~\ref{tab:models} lists the 20 models evaluated in the main
experiments, together with their providers, parameter counts where
disclosed, reasoning support, and the exact API or checkpoint
identifiers used. Open-weights models were served locally via vLLM;
closed models were accessed through their respective provider APIs.
All queries used temperature~0 with provider-default sampling
parameters; reasoning effort was set to the provider default for
models that expose a tunable reasoning level.

\paragraph{Reasoning modes.}
Each model is evaluated in up to two modes:
\emph{standard inference} (no extended reasoning) and
\emph{extended reasoning} (the model's native long chain-of-thought
or thinking mode, where supported). Models without a thinking toggle
are evaluated only in standard inference. The ``Reasoning'' column
in Table~\ref{tab:models} indicates which modes were run for each
model.

\begin{table}[h]
\centering
\small
\setlength{\tabcolsep}{4pt}
\renewcommand{\arraystretch}{1.1}
\caption{Models evaluated in TRIAGE. \emph{Weights}: O = open,
C = closed. \emph{Params}: parameter count where disclosed; for
mixture-of-experts models, total / active. \emph{Reasoning}: S =
standard inference only, S+E = both standard and extended reasoning
modes evaluated. \emph{Identifier}: provider API string or
HuggingFace checkpoint used.}
\label{tab:models}
\begin{tabular}{llllll}
\toprule
\textbf{Model} & \textbf{Provider} & \textbf{Weights} & \textbf{Params} & \textbf{Reasoning} & \textbf{Identifier} \\
\midrule
\multicolumn{6}{l}{\textit{Qwen 2.5 series (Alibaba)}} \\
Qwen 2.5 3B          & Alibaba & O & 3B         & S   & \texttt{Qwen/Qwen2.5-3B-Instruct}  \\
Qwen 2.5 7B          & Alibaba & O & 7B         & S   & \texttt{Qwen/Qwen2.5-7B-Instruct}  \\
Qwen 2.5 14B         & Alibaba & O & 14B        & S   & \texttt{Qwen/Qwen2.5-14B-Instruct} \\
Qwen 2.5 32B         & Alibaba & O & 32B        & S   & \texttt{Qwen/Qwen2.5-32B-Instruct} \\
\midrule
\multicolumn{6}{l}{\textit{Qwen 3 series (Alibaba)}} \\
Qwen 3 4B            & Alibaba & O & 4B         & E & \texttt{Qwen/Qwen3-4B}  \\
Qwen 3 8B            & Alibaba & O & 8B         & E & \texttt{Qwen/Qwen3-8B}  \\
Qwen 3 14B           & Alibaba & O & 14B        & E & \texttt{Qwen/Qwen3-14B} \\
Qwen 3 32B           & Alibaba & O & 32B        & E & \texttt{Qwen/Qwen3-32B} \\
\midrule
\multicolumn{6}{l}{\textit{Qwen 3.5 series (Alibaba)}} \\
Qwen 3.5 4B          & Alibaba & O & 4B         & S+E & \texttt{Qwen/Qwen3.5-4B}      \\
Qwen 3.5 9B          & Alibaba & O & 9B         & S+E & \texttt{Qwen/Qwen3.5-9B}      \\
Qwen 3.5 27B         & Alibaba & O & 27B        & S+E & \texttt{Qwen/Qwen3.5-27B}     \\
Qwen 3.5 35B-A3B     & Alibaba & O & 35B / 3B   & S+E & \texttt{Qwen/Qwen3.5-35B-A3B} \\
\midrule
\multicolumn{6}{l}{\textit{Frontier and other open models}} \\
Claude Haiku 4.5     & Anthropic & C & ---        & S+E & \texttt{claude-haiku-4-5}     \\
DeepSeek V3          & DeepSeek  & O & 671B / 37B & S+E & \texttt{deepseek-chat}        \\
Gemini 2.5 Flash     & Google    & C & ---        & S+E & \texttt{gemini-2.5-flash}     \\
GLM 5.1              & Zhipu     & O & ---        & S+E & \texttt{glm-5.1}              \\
GPT-5 Mini           & OpenAI    & C & ---        & S+E & \texttt{gpt-5-mini}           \\
Kimi K2.5            & Moonshot  & O & ---        & S+E & \texttt{kimi-k2.5}            \\
GPT-OSS 20B          & OpenAI    & O & 20B        & E & \texttt{openai/gpt-oss-20b}   \\
GPT-OSS 120B         & OpenAI    & O & 120B       & E & \texttt{openai/gpt-oss-120b}  \\
\bottomrule
\end{tabular}
\end{table}

\paragraph{Sampling and decoding.}
All models were queried at temperature~0. For models accessed via
provider APIs, we used each provider's default values for top-$p$,
top-$k$, frequency penalty, and presence penalty unless these
parameters were unsupported at temperature~0. For locally served
models, decoding used vLLM defaults with greedy sampling. Maximum
output length was set to a value sufficient to capture the longest
plan and reasoning trace observed for each model in pilot runs;
exact values per model are recorded in the released configuration
files.

\paragraph{Reasoning effort settings.}
For models that expose a tunable reasoning level (e.g., reasoning
effort, thinking budget), we used the provider's default
reasoning setting (e.g., medium for GPT-OSS) in extended-reasoning mode and disabled
reasoning entirely in standard-inference mode.